\documentclass[acmsmall,screen,nonacm]{acmart}

\AtBeginDocument{%
  \providecommand\BibTeX{{%
    \normalfont B\kern-0.5em{\scshape i\kern-0.25em b}\kern-0.8em\TeX}}}


\settopmatter{printacmref=false}
\setcopyright{none}
\renewcommand\footnotetextcopyrightpermission[1]{}
\pagestyle{plain}

\usepackage[utf8]{inputenc}
\usepackage[T1]{fontenc}

\usepackage{xcolor}
\usepackage[export]{adjustbox}

\usepackage[many]{tcolorbox}
\usepackage{caption}
\usepackage{graphicx}

\usepackage{stackengine}
\def\delequal{\mathrel{\ensurestackMath{\stackon[1pt]{=}{\scriptstyle\Delta}}}}

\definecolor{captionbgcolor}{RGB}{103,143,150}

\DeclareCaptionFormat{tcbcaption}{%
  \begin{tcolorbox}[
    nobeforeafter,
    colback=captionbgcolor,
    arc=0pt,
    outer arc=0pt,
    boxrule=0pt,
    colupper=white,
   fontupper=\normalsize\sffamily,
    boxsep=0pt
  ]
  #1#2#3
  \end{tcolorbox}%
}
\captionsetup{format=tcbcaption}

\newtheorem{assump}{Assumption}
\newtheorem{definition}{Definition}
\newtheorem{proposition}{Proposition}

\usepackage[linesnumbered, ruled, vlined]{algorithm2e}
\usepackage{mathtools}
\usepackage{threeparttable}
\usepackage[noend]{algpseudocode}
\usepackage{indentfirst}
\usepackage{graphicx}
\usepackage{subfigure}
\usepackage{amsmath}
\usepackage{breqn}
\makeatletter
\newcommand{\rmnum}[1]{\romannumeral #1}
\newcommand{\Rmnum}[1]{\expandafter\@slowromancap\romannumeral #1@}
\makeatother
\usepackage{array}
\newcolumntype{C}[1]{>{\centering\arraybackslash}m{#1}}

\SetKwInOut{Begin}{Begin}
\SetKwInOut{Input}{Input}
\SetKwInOut{Output}{Output}
\SetKw{Continue}{continue}
\SetKw{Break}{break}


\usepackage{cleveref}
\usepackage{caption}

\title{MarS-FL: Enabling Competitors to Collaborate in Federated Learning}

\author{Xiaohu Wu}
\affiliation{%
  \institution{Nanyang Technological University}
  \country{Singapore}
}
\email{xiaohu.wu@ntu.edu.sg}

\author{Han Yu}
\affiliation{%
  \institution{Nanyang Technological University}
  \country{Singapore}
}
\email{han.yu@ntu.edu.sg}



\begin{abstract}
Federated learning (FL) enables multiple data owners to collaboratively participate in the machine learning (ML) training process without data sharing, which is crucial to the application of ML to various scenarios as data privacy is a growing concern of modern societies.
Among the three use cases of FL is an unaddressed one where the FL participants (FL-PTs) rely on ML models to provide services in a competitive market, where no FL-PTs can accept a significant reduction in their market shares that represent their competitiveness. Currently, there is no means for identifying the market conditions under which the application of FL in a specific market is viable, and specifying the requirements that have to be imposed while applying FL under these conditions. In this paper, we propose the \underline{mar}ket \underline{s}hare-based decision support framework for participation in \underline{FL} (MarS-FL).
Specifically, we introduce {\em two notions of $\delta$-stable market} and {\em friendliness} to measure the viability of FL and the market acceptability of FL. The FL-PT behaviours can be predicted using game theoretic tools (i.e., their optimal strategies concerning participation in FL). If the market $\delta$-stability is achievable, the final model performance improvement of each FL-PT shall be bounded, which relates to the market conditions of FL applications. We provide tight bounds and quantify the friendliness, $\kappa$, of given market conditions to FL. Experimental results show the viability of FL in a wide range of market conditions. This paper provides a conceptual framework that serves as a starting point and allows for the further development of FL in the scenarios where FL-PTs are competitors.
\end{abstract}
\begin{document}

\flushbottom
\maketitle
%
%


\section{Introduction}
\label{sec.introduction}

Federated learning (FL) is an emerging privacy-preserving collaborative machine learning (ML) paradigm and has gained widespread attention \cite{Kairouz-et-al:2021}. An individual data owner ({\em a.k.a.} FL participants (FL-PTs)) often has insufficient data to train its learning model.
Multiple FL-PTs can collaboratively build better ML models with their local data while no local data of an individual are exposed and transferred outside, thereby preserving data privacy by design \cite{Yang19a,cheng2020federated}. {For example, in a centralized FL architecture, the FL-PTs periodically uploads their respective local ML model updates to a central server that will aggregate these updates for collaboratively training a global ML model, instead of uploading local data. This process is illustrated in Figures~\ref{Fig-FL}\subref{Fig-Federated-Learning} and iterated until the global ML model converges.}



As data privacy has become a growing concern in modern societies and world-wide governments are enacting related laws \cite{cheng2020federated}, FL has become especially important and it has many promising applications such as digital banking \cite{Long-et-al:2020,yang2021toward}, recommender systems for online services \cite{Tan2020federated,kalloori2021horizontal}, safety monitoring \cite{Liu20a}, healthcare \cite{Sheller-et-al:2020,Kaissis-et-al:2020} and mobile applications \cite{bonawitz2019towards}. With the development of 5G/6G networks, the applications of FL are also expanding rapidly \cite{letaief2019roadmap,yang2021federated}. In such context, proper incentive mechanisms are urgently needed to incentivize data owners to join the FL training process \cite{Zhan21a,Lyu20b,Kairouz-et-al:2021}. Based on the relationship of the eventual model users and FL-PTs, the prevailing use cases of FL can be categorized as follows:
\begin{enumerate}
    \item The FL-PTs are not the model users. The model users benefit from the model performance improvement, which has no direct impact on FL-PTs' wellbeing. In this case, since FL-PTs are not interested in the model performance, monetary rewards are often used to incentivize FL-PTs to contribute more local resources to the FL training process \cite{Song19a,Sarikaya19a,Zhang21a,zhan2020learning,Yu20a,Yu-et-al:2020,Zeng20a}. This is illustrated in Figures~\ref{Fig-FL}\subref{Fig-FL-Indifferent-1-v5}.

    \item The FL-PTs are the eventual model users, and an FL-PT's utility is not affected by the model performance improvements obtained by other FL-PTs. {This is illustrated in Figures~\ref{Fig-FL}\subref{Fig-FL-Indifferent-2-v5} where each person only cares about its own health and ML model accuracy.} In this case, different FL-PTs achieve different levels of model performance improvements and monetary transfer is taken to let the FL-PTs of higher payoff compensate the FL-PTs of lower payoff \cite{Tang21a}.

   \item The FL-PTs are the eventual model users. However, an FL-PT's utility is dependent on the model performance improvements of all FL-PTs. {This is illustrated in Figures~\ref{Fig-FL}\subref{Fig-FL-Compete-v2} where each customer compares different FL-PTs (e.g., apps) and weighs the ML model performances of all FL-PTs before choosing an app for use.}
\end{enumerate}

\begin{figure}[t!]
\centering
\subfigure[A centralized FL architecture]{\includegraphics[width=0.8\columnwidth]{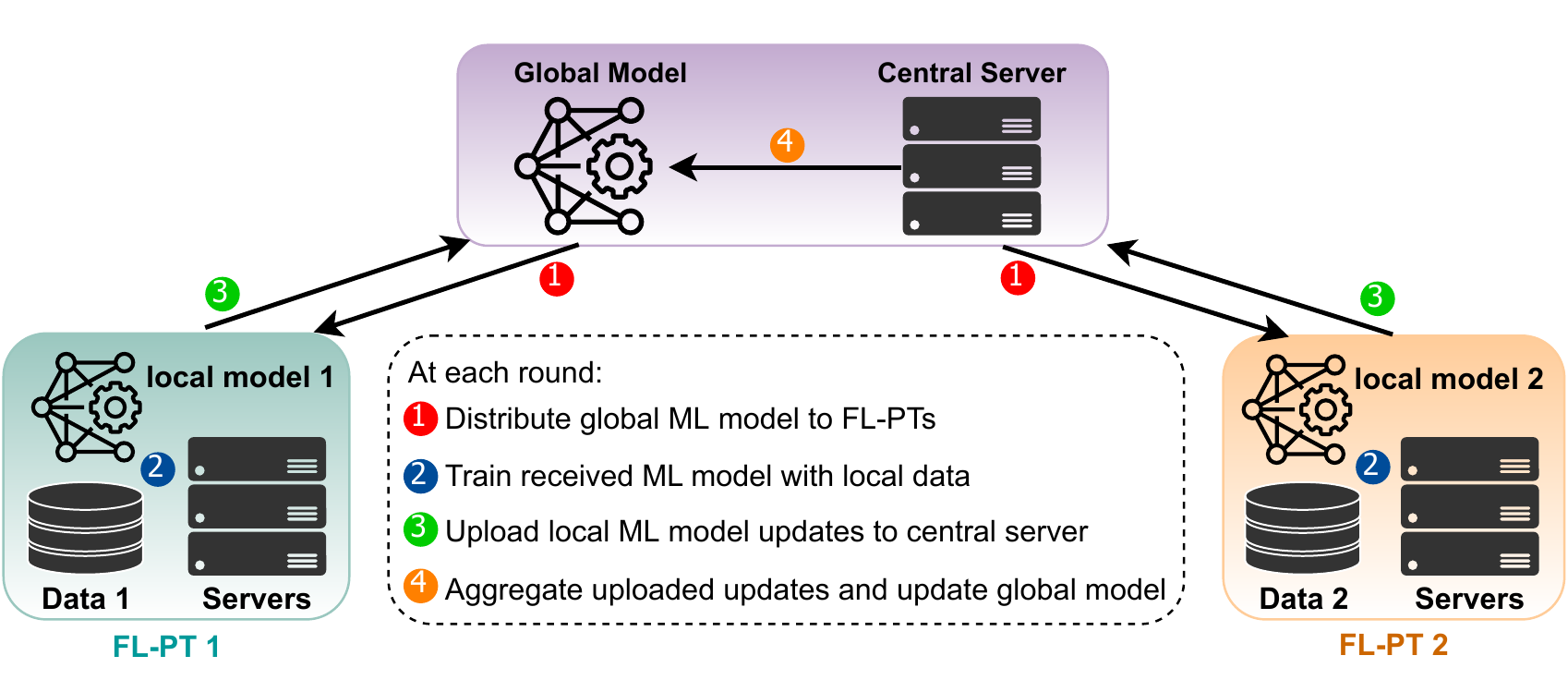}
\label{Fig-Federated-Learning}}\\
\subfigure[Case 1]{\includegraphics[width=0.09\columnwidth]{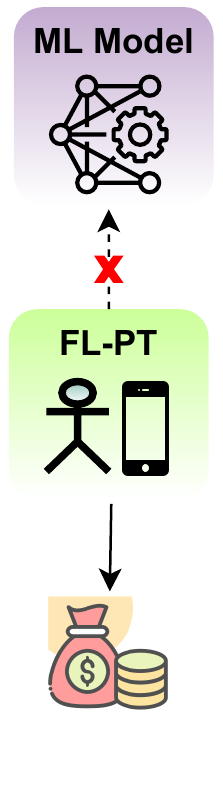}\label{Fig-FL-Indifferent-1-v5}}
\hspace{1.05cm}\subfigure[Use case 2]{\includegraphics[width=0.274\columnwidth]{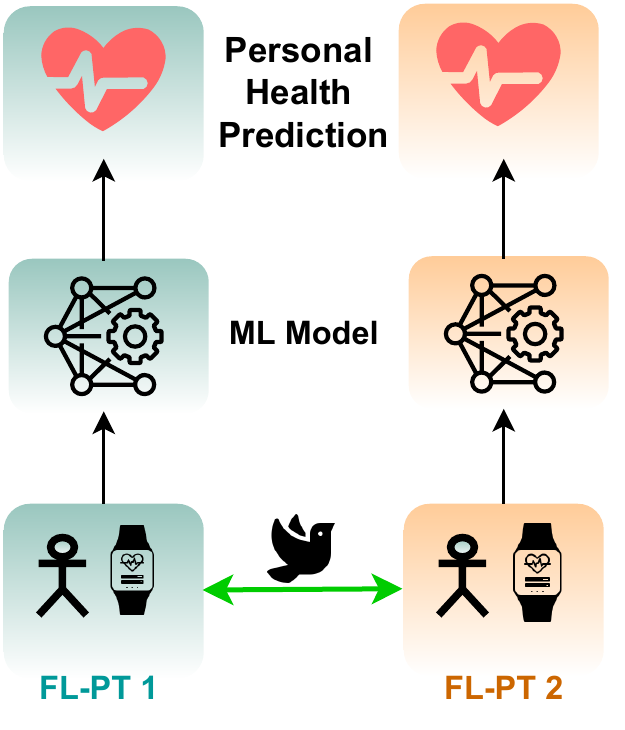}\label{Fig-FL-Indifferent-2-v5}}
\hspace{0.7cm}\subfigure[Use case 3: the use case of this paper]{\includegraphics[width=0.436\columnwidth]{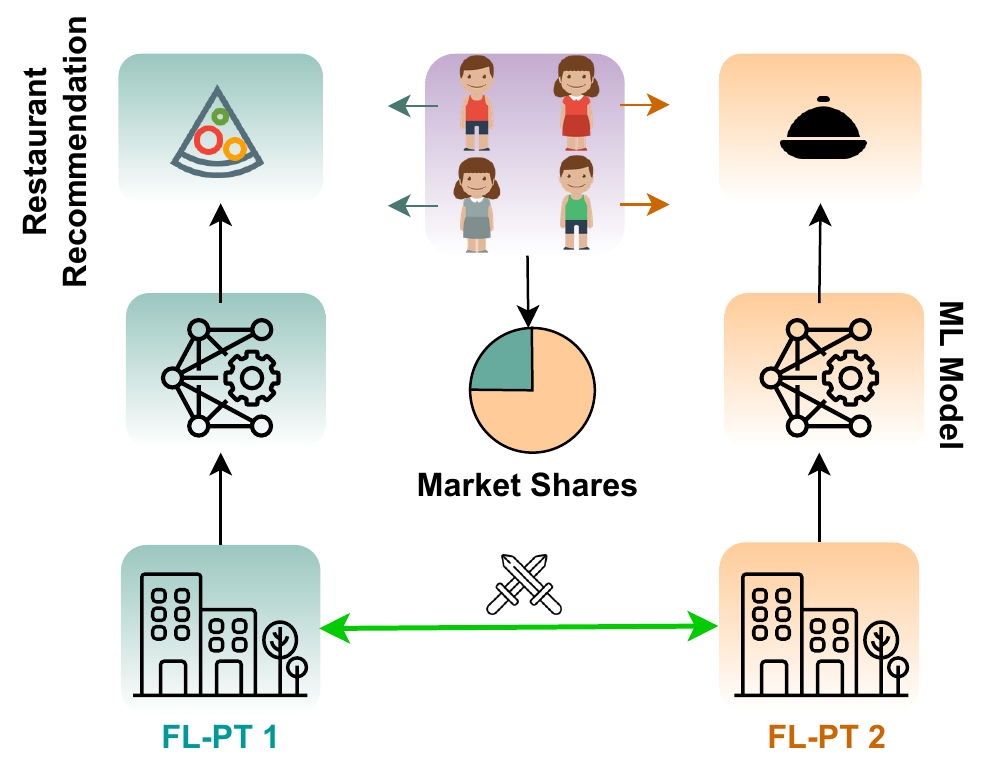}
\label{Fig-FL-Compete-v2}}
\caption{FL overview and its three use cases}
\label{Fig-FL}
\end{figure}

Although the third use case has been well recognized \cite{Zhan21a,Lyu20a,yang2021toward,kalloori2021horizontal}, its formal study is missing in existing literature, which hinders FL-based collaboration in practical situations. This is probably due to the unclearness of the metrics and additional constraints for judging and maintaining the viability of FL in a competitive market. In contrast, in the first and second use cases, such questions do not exist since the FL viability is obvious. In this paper, we study the third use case, which is commonly found in horizontal federated learning (HFL) scenarios in which the FL model is used for profit-making activities ({\em e.g.,} personalized loan interest determination in digital banking) \cite{Yang19a}. In such scenarios, FL-PTs provide the same services and compete for a same group of customers. 
Market share is a key indicator of FL-PTs' market competitiveness ({\em i.e.,} how well a firm is doing against its competitors) \cite{Zhan21a,Farris10a}. For example\cite{Zhan21a,Lyu20a}, when multiple digital banks collaboratively train a FL model to predict the creditworthiness of customers, larger banks with more high quality data may be reluctant to join FL for fear of benefiting its smaller competitors and eroding its market share. The underlying challenge has been framed as the ``free-rider problem'' in FL \cite{Kairouz-et-al:2021} for which monetary incentives are not effective. 

This challenge has inspired trust-based FL ecosystems to emerge, which are built on the proposition that the FL model performance improvement obtained by each FL-PT shall be proportional to its contribution \cite{Kairouz-et-al:2021, Lyu20b}.
However, there is no analytical tool to help FL-PTs determine how it affects their market shares under different market conditions. 
Thus, the data owners still face uncertainty regarding potential market share erosion as a result of joining FL, which hinders the adoption of FL in HFL scenarios involving competitors. As FL can improve FL-PTs' model performance which, in turn, results in better products or services to enhance the collective utility of the businesses and customers involved \cite{Yang19a,Kairouz-et-al:2021}, a clear understanding of the impact of joining FL on each FL-PT's market share to facilitate wider adoption of FL is crucial.

Suppose that the original market size is $P$ and there are $n$ firms ({\em i.e.,} FL-PTs) $\mathcal{C}=\{1, 2, \cdots, n\}$ in a given market which can join FL. The original market share of FL-PT $i\in\mathcal{C}$ is $MS_{i}\in (0,1)$, where $\sum\nolimits_{i=1}^{n}{MS_{i}}=1$. After joining FL, the market size becomes $P^{\prime}$ and the market share of FL-PT $i\in\mathcal{C}$ becomes $MS_{i}^{\prime}\in (0,1)$, where $\sum\nolimits_{i=1}^{n}{MS_{i}^{\prime}}=1$. We now define a notion of market stability to measure the viability of FL. Intuitively, no FL-PT is willing to accept a significant reduction in its market share as a result of joining FL (i.e., no FL-PT's interest shall be sacrificed), even if all FL-PTs benefit from FL with improved model performance and quality of service. Formally, we have:

\begin{definition}[\textbf{$\delta$-Stable Market}]\label{def-market-stability}
In a given FL ecosystem, the market is $\delta$-stable if the market share of every FL-PT $i\in\mathcal{C}$ satisfies the following condition: $V_{i} = MS_{i} - MS_{i}^{\prime} \leqslant \delta$,
where $\delta \in (-1, 1)$ is an upper bound of the difference of market shares for any given FL-PT before and after joining FL.
\end{definition}
We refer to $V_{i}$ as {\em the market variance of FL-PT $i$} as a result of joining FL. Ideally, the market share of each FL-PT $i$ does not vary (i.e., $\delta=0$). However, this is hard to achieve in practice. Thus, we relax it to $\delta$ not exceeding a small positive real number within $(0, 1)$ and that each FL-PT $i$ experiencing an acceptable reduction in its market share after joining FL in the worst case. Then, the FL ecosystem is said to be viable only if the market is $\delta$-stable.

In this paper, we propose an analytical framework for \underline{mar}ket \underline{s}hare-based decision support about participation in \underline{FL} (MarS-FL) by leveraging game theory and marketing models \cite{Osborne04a,Rust93a}. 
Given the original FL-PTs' market shares before joining FL, the market dynamics are driven by exogenous factors such as the customer loyalty, and the leaving and switching rates of customers for each FL-PT; {also see Figure~\ref{market-status}}. It can be embodied by a general mathematical model in economic literature \cite{Rust93a}. We also take an additional factor into account: the possible market growth rate. 
In the FL context, the switching customer of a FL-PT will reconsider choosing one of the $n$ FL-PTs. Due to  market growth, a customer who newly enters the market will choose one of the $n$ FL-PTs. In both cases, the possibility of choosing an FL-PT is positively correlated to the service quality of the FL-PT which, in turn, depends on the model performance of this FL-PT. Intuitively, the model performance improvement of each FL-PT is bounded, resulting in bounded variations in market shares.

For the decision-makers of an FL ecosystem, MarS-FL provides a tight {and elegant} lower bound of the minimum relative ML model performance improvement required by each FL-PT in order to maintain the market $\delta$-stability. These bounds can guide the allocation of the ML model performance improvements among FL-PTs so that they can be well motivated to join FL.
Based on these bounds, we further define a notion of {\em friendliness} to measure how conducive a market is for FL. It provides a sufficient and necessary condition for the viability of FL in a competitive market.
Through numerical experiments, we demonstrate the capability of MarS-FL to analyze the viability of FL under a wide range of market conditions. {Our results are useful for identifying the market conditions under which collaborative FL model training is viable among competitors, and the requirements that have to be imposed while applying FL under these conditions.}

\section{Related Work}

Our work in this paper is broadly related to incentive mechanism design for motivating participation in FL.
Existing research in this domain can be broadly divided into two categories \cite{Kairouz-et-al:2021}: 1) monetary FL incentive schemes, and 2) non-monetary FL incentive schemes.

Monetary incentive schemes are generally designed for the first FL use case. These schemes that provide monetary rewards to FL-PTs in order to motivate them to contribute more local resources to FL model training. FL-PTs in this case are often resource-constrained devices such as mobile devices. The types of resources include not only local data but also computational and communication resources. Sarikaya {\em et al.} \cite{Sarikaya19a} models the interaction between the FL server ({\em i.e.,} the model user) and FL-PTs as a Stackelberg game with the goal of improving the FL model performance by jointly optimizing the commitment of local computational resources and the allocation of the incentive budget. Zhan {\em et al.} \cite{zhan2020learning} also leverages the Stackelberg game to model the interaction to incentivize FL-PTs to contribute more data. Zeng {\em et al.} \cite{Zeng20a} considers motivating FL-PTs to contribute multiple types of resources by applying multi-dimensional procurement auction theory and proposed a scheme that selects and rewards a fixed number of FL-PTs for FL training.
Song {\em et al.} \cite{Song19a} and Zhang {\em et al.} \cite{Zhang21a} proposed schemes to select and pay FL-PTs based on reputation and reverse auction.
The FLI approach \cite{Yu20a,Yu-et-al:2020} has been proposed which supports fair allocation of incentive payout to FL-PTs using future revenues generated by the FL model. These approaches leverage incentive mechanism research results in economics and game theory. As they generally assume that the FL-PTs care only about monetary rewards, the intermediate FL models and the final FL model are generally freely shared during the FL training process. For the second use case, Tang {\em et al.} \cite{Tang21a} assume that each FL-PT will commit all its data for local model training, and characterize the interaction among FL-PTs as a non-cooperative game. They propose, under mild assumptions, an incentive scheme that achieves social welfare maximization, individual rationality and budget balance.

In the third FL use case in which the FL-PTs are also the end users of the final FL models and competing in the same market, which is the focus of our study, sharing the same FL model among all participants without regards to their contributions has been shown to cause breakdown of collaboration \cite{Lyu20a}. Non-monetary incentive schemes which assign each FL-PT a different model in each training iteration with performance reflecting its contribution are starting to emerge \cite{xu2020towards,Lyu20a,Lyu20b}. However, these existing approaches do not take FL-PTs' market shares into account when allocating different versions of FL models to them. MarS-FL bridges this gap by identifying the optimal FL participation strategies, the viable operational space of an FL system, and the market conditions under which FL can be beneficial for a given FL-PT.

\section{FL Market Dynamics}
In this section, we provide a detailed analysis of the market dynamics surrounding the third FL use case scenario. We first introduce two typical FL system architectures.

\subsection{FL System Architectures}

In FL, each FL-PT $i\in \{1, 2, \cdots, n\}$ has a local model with a common structure. The training process iterates for multiple rounds.
We use $w_{i}^{t}$ to denote the parameters of the local model of FL-PT $i$ at round $t\in\{1,2,\cdots,T\}$. In the {\em centralized FL architecture}, there is a central FL server to mediate the training process \cite{Tang21a}. In the beginning of round $t$, let $w^{t}$ denote the parameters of the global FL model owned by the FL server and $\hat{w}_{i}^{t}$ denote the model parameters sent from the FL server to FL-PT $i$. In the current design, $\hat{w}_{i}^{t}=w^{t}$, which is not applicable to the use case of this paper; however, the related questions can be addressed in future based on the results of this paper. Each FL-PT $i$ downloads $\hat{w}_{i}^{t}$. $i$ uses a batch of its local data to train model $\hat{w}_{i}^{t}$ and computes the gradient $\nabla w_{i}^{t}$. The updated local model is denoted as $w_{i}^{t}=\hat{w}_{i}^{t}-\eta \nabla w_{i}^{t}$ where $\eta$ is the learning rate. $i$ then uploads $w_{i}^{t}$ (or $\nabla w_{i}^{t}$) to the FL server. The FL server produces the global model $w^{t+1}$ by aggregating the received local model updates from all FL-PTs. Let $x_{i}$ denote the amount of data that $i$ decides to use for its local model training. In the classic Federated Averaging strategy, the aggregating process is defined as\cite{McMahan17a,Li2020On}: $$w^{t+1} = \frac{x_{i}}{\sum_{j=1}^{n}{x_{j}}}w_{i}^{t}.$$
In the {\em decentralized FL architecture}, the training process iterates without a central FL server \cite{Kairouz-et-al:2021}. In the beginning of round $t$, each FL-PT $i$ uses a batch of its local data to train its local model $w_{i}^{t-1}$ obtained at the last round and computes the gradient $\nabla w_{i}^{t}$. The updated model is denoted as $w_{i}^{t}=w_{i}^{t-1}-\eta \nabla w_{i}^{t}$. $i$ then shares the updates $\nabla w_{i}^{t}$ with other FL-PTs which have agreed to establish collaborative training partnership with $i$ (based on considerations such as trust \cite{Lyu20a}). In the meantime, $i$ also downloads local model updates from other partners. Then, $i$ further updates its model $w_{i}^{t}$ by aggregating the received local model updates from its partners.



\subsection{FL Market Model}
\label{sec.overview-problem}

Let us consider $n$ firms ({\em i.e.,} FL-PTs) $\mathcal{C}=\{1, 2, \cdots, n\}$ in a given market which can join FL. The market size of FL-PT $i$ is $P_{i}$ ({\em e.g.,} in terms of the number of customers) and their aggregated market size is denoted as $P$. The (relative) market share of $i$ is $MS_{i}\in (0,1]$ where
\begin{align}\label{equa-customer-number-i}
P_{i}=MS_{i}\times P.
\end{align}
Thus, we have $\sum_{i=1}^{n}{MS_{i}}=1$.
Firm $i$ owns a local dataset $\mathcal{D}^{i}$ with $D^{i}=|\mathcal{D}^{i}|$ samples. The local model performance can be measured by the model loss value, which in turn, affects the quality of service offered by the firm. Smaller loss values imply better quality of service.The variable $x_{i}\in \left[0, D^{i} \right]$ denotes the amount of local data that FL-PT $i$ decides to use for FL training.
If $i$ trains its model solely using its local data $\mathcal{D}^{i}$ without joining FL, the final loss function $L_{i}$ of the resulting model can be expressed as:
\begin{align*}
L_{i} = L_{i}\left( D^{i} \right).
\end{align*}
If $i$ joins FL, it can leverage local models from other FL-PTs. After the FL model training process ends, the model loss function $L_{i}^{\prime}$ of FL-PT $i$ can be expressed as:
\begin{align*}
L_{i}^{\prime} = L_{i}^{\prime}\left( \left\{D^{j}\right\}_{j=1}^{n},  \left\{x_{j}\right\}_{j=1}^{n}, Trad \right).
\end{align*}
$Trad$ denotes a given model exchange scheme among FL-PTs ({\em e.g.,} under the centralized FL architecture \cite{McMahan17a}, or the decentralized FL architecture \cite{Lyu20a}). It determines how much information $i$ obtains during the FL process and thus the value of $L_{i}^{\prime}$.

In this paper, we consider FL ecosystems with non-monetary incentive schemes in which an FL-PT's reward is reflected by the final loss function value of the model it obtains. We make a natural assumption that for an FL-PT $i\in \mathcal{C}$, its contribution to the FL system increases as it commits more of its local data for FL training, and the loss function value of $i$ decreases as its contribution to FL increases. Formally, the assumption is expressed as follows.

\begin{assump}\label{assump-more-data-contribution}
Suppose $D^{i} \geqslant x_{i}^{(1)}>x_{i}^{(2)}\geqslant 0$. 
Then, we have
\begin{align*}
L_{i}^{\prime}\left( x_{i}^{(1)} \right) < L_{i}^{\prime}\left( x_{i}^{(2)} \right).
\end{align*}
\end{assump}

After the FL model training, the model performance improvement of FL-PT $i\in\mathcal{C}$ is measured by:
\begin{align}\label{equa-model-improvement}
d_{i} = L_{i} - L_{i}^{\prime} \geqslant 0.
\end{align}
We focus on studying the effect of the final ML model performance improvements $\{d_{i}\}_{i=1}^{n}$ on the market shares. $\sum_{j=1}^{n}{d_{j}}$ denotes the aggregate model performance improvement for all FL-PTs. The relative model performance improvement $Q_{i}$ for FL-PT $i$ is defined as:
\begin{align}\label{equa-relative-model-quality}
Q_{i} = \frac{d_{i}}{\sum_{j=1}^{n}{d_{j}}},
\end{align}
where we have:
\begin{align}\label{equa-relative-model-quality-property}
Q_{i} \in [0, 1] \text{ and } \sum\nolimits_{i=1}^{n}{Q_{i}} = 1.
\end{align}
Given the value of $\sum_{j=1}^{n}{d_{j}}$, $\{Q_{i}\}_{i=1}^{n}$ corresponds to $\{d_{i}\}_{i=1}^{n}$. The service quality improvement $S_{i}^{\prime}$ of FL-PT $i$ is proportional to the model performance improvement $d_{i}$ and is defined as:
\begin{align}\label{equa-def-service-quality}
S_{i}^{\prime} = \alpha\cdot d_{i}
\end{align}
where, in a given market, the weight $\alpha>0$ is the degree of improvement to service quality brought by a unit of model performance improvement. The relative service quality improvement for $i$, $S_{i}$, is defined as:
\begin{align}\label{equa-def-relative-service-quality}
S_{i} = \frac{S_{i}^{\prime}}{\sum_{j=1}^{n}{S_{j}^{\prime}}} \overset{(a)}{=} Q_{i} \in [0, 1]
\end{align}
where the equality (a) is a result of Eq. \eqref{equa-def-service-quality}. We focus on a competitive market in which each firm serves a group of customers with no overlap. The attractiveness of firms to customers changes based on their relative service quality. The customers of a firm may switch to another firm. If the model performance of the firms improves in general, the entire market becomes more attractive, resulting in increases in the market size. Such market dynamics are detailed in the next subsection. We use $MS_{i}^{\prime}$ to denote the market share of $i$ after the FL process.

\subsection{FL-PT Market Share Dynamics}
\label{sec.characterize-market-shares}

\begin{figure}[t!]
\centering
\includegraphics[width=0.75\columnwidth]{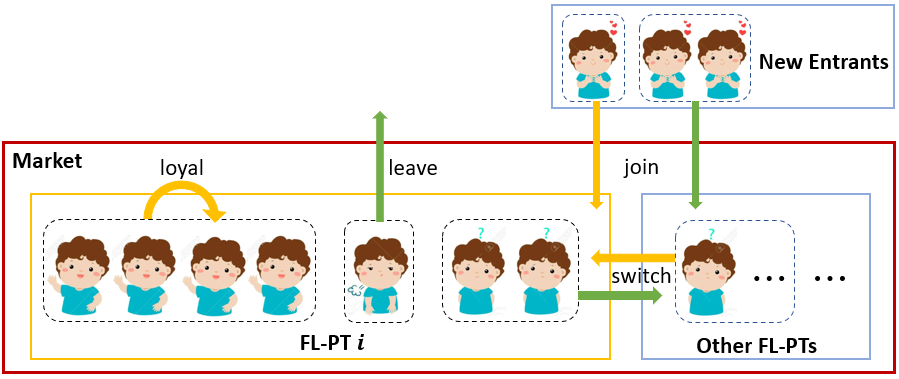}
\caption{The possible movements of customers among FL-PTs, and into and out of the market.}
\label{market-status}
\end{figure}

In this subsection, we adapt the classic model of Rust and Zahorik \cite{Rust93a} to characterize the FL-PTs' market shares $\{MS_{i}^{\prime}\}_{i=1}^{n}$ after FL training concludes. The market dynamics are mainly driven by (\rmnum{1}) the loyalty and the leaving or switching actions of the existing customers of different FL-PTs and (\rmnum{2}) the action of the customers who newly enter the market, which is illustrated in Figure~\ref{market-status}. Initially, the customers for $i\in\mathcal{C}$ (whose number totals to $P_{i}=MS_{i}\times P$) can be classified into three types:
\begin{itemize}
    \item Let $r_{i}\in [0,1]$ denote the proportion of customers loyal to $i$. After FL model training, $(1-r_{i})P_{i}$ customers will leave $i$. They will either exit the market or switch to other FL-PTs.
    \item Let $\nu_{i}\in [0,1-r_{i}]$ denote the proportion of customers who leave the market where we have
    \begin{align}\label{equa-range-r-nu}
    r_{i}+\nu_{i} \in [0,1].
    \end{align}
    After FL model training, $\nu_{i} P_{i}$ customers of $i$ will exit the market.
    \item The remaining $(1-r_{i}-\nu_{i}) P_{i}$ customers are still active in the market, but will consider the possibility of switching to other FL-PTs. These customers are referred to as ``free customers'' from $i$.
\end{itemize}
 Up to $\theta P$ new customers may join the market and be served by the $n$ firms, where $\theta\geqslant 0$. Generally, the market share of FL-PT $i\in\mathcal{C}$ consists of the following:
\begin{description}
\item [Loyal customers of $i$.] The number of loyal customers of $i$ is:
\begin{align}\label{equa-num-loyal-i}
P_{l,i} = r_{i} P_{i}.
\end{align}

\vspace{0.2em}\item [Free customers from other FL-PTs joining $i$.] We use $S_{i}$ defined in Eq. \eqref{equa-def-relative-service-quality} to denote the attractiveness of $i$ to customers. The fraction of free customers from $j\in\mathcal{C}$ joining $i$ is linearly proportional to $S_{i}$. Specifically, the number of customers leaving $j$ for $i$ equals to $(1-r_{j}-\nu_{j})P_{j}S_{i}$. The total number of free customers from itself and the other $(n-1)$ FL-PTs joining $i$ is:
\begin{align}\label{equa-num-free-to-i}
P_{f,i} = \sum\nolimits_{j\in \mathcal{C}}{(1-r_{j}-\nu_{j}) P_{j} S_{i}}.
\end{align}

\vspace{0.1em}\item [New customers joining $i$.] $\theta P$ new customers will be divided among the $n$ FL-PTs. The number of new customers joining $i$ is proportional to its attractiveness:
\begin{align}\label{equa-num-new-i}
P_{e,i} = S_{i} \theta P.
\end{align}
\end{description}
After FL training, $\sum\nolimits_{i=1}^{n}{\nu_{i}P_{i}}$ customers will exit the market; we assume that this amount is small such that the whole market size is still positive and the new market size becomes:
\begin{align}\label{equa-new-size}
P^{\prime} = (1+\theta)P-\sum\nolimits_{i=1}^{n}{\nu_{i}P_{i}} > 0.
\end{align}
Based on Eq. \eqref{equa-customer-number-i} and Eq. \eqref{equa-num-loyal-i}--\eqref{equa-new-size}, the market share of FL-PT $i$ becomes:
\begin{equation}\label{equa-changed-market-share}
\begin{split}
MS_{i}^{\prime} & =  \frac{P_{l,i} + P_{f,i} + P_{e,i}}{P^{\prime}}  = \frac{r_{i}MS_{i} + S_{i}\sum\nolimits_{j\in \mathcal{C}}{(1-r_{j}-\nu_{j})MS_{j}} + S_{i} \theta}{(1+\theta)-\sum_{j=1}^{n}{\nu_{j}MS_{j}}}.
\end{split}
\end{equation}

\subsection{Decision Support Tasks}
\label{sec.Decision-support-tasks}

Below, we formulate the decision support tasks to be addressed while understanding the role of FL in a competitive market.


\vspace{0.4em}\noindent\textbf{A Non-Cooperative Game.} The model of non-cooperative game provides insight into the interaction of FL-PTs. If a game has a dominant strategy equilibrium, such an equilibrium will be an ideal notion to predict the best course of action by any given player \cite{Osborne04a}.

The $n$ FL-PTs are self-interested and compete against each other. Each FL-PT $i$ has a decision variable $x_{i}$ which determine how many local data samples are used for FL training. $x_{i}$, in turn, determines the loss function value $L_{i}^{\prime}$ for $i$ after the FL model training concludes. This corresponds to a non-cooperative game where the $n$ FL-PTs are players. $i$'s strategy space is $\mathcal{X}_{i}=\left[0, D^{i}\right]$ and a single strategy is $x_{i}\in \mathcal{X}_{i}$. The strategy space of the other $(n-1)$ FL-PTs is denoted as $\mathcal{X}_{-i}=\left\{(x_{1}, \cdots, x_{i-1}, x_{i+1}, \cdots, x_{n}) \,|\, x_{i^{\prime}}\in \mathcal{X}_{i^{\prime}}, \forall i^{\prime}\in \mathcal{C}-\{i\}\right\}$.
$i$ aims to enhance its market status in a competitive market, and its payoff is defined as its market share $MS_{i}^{\prime}$ (or equivalently $MS_{i}^{\prime}P^{\prime}$). An ideal situation is that each FL-PT $i\in\mathcal{C}$ can decide its best strategy $x_{i}^{\ast}\in \mathcal{X}_{i}$ to maximize its market share, even if it does not have any knowledge of the strategies of other FL-PTs. Formally, $x^{\ast}=\{x_{i}^{\ast}\}_{i=1}^{n}$ is a dominant strategy equilibrium of the game if we have that, for each FL-PT $i\in\mathcal{C}$ and any alternate strategy $x_{i}^{\prime}\in \mathcal{X}_{i}$ different than $x_{i}^{\ast}$,
\begin{align}\label{equa-dominant-strategy}
MS_{i}^{\prime}(x_{i}^{\ast}, x_{-i}) \geqslant MS_{i}^{\prime}(x_{i}^{\prime}, x_{-i}), \forall x_{-i}\in \mathcal{X}_{-i}.
\end{align}


\vspace{0.4em}\noindent\textbf{FL Viability and Market Friendliness.} Another decision support task of importance to the manager of an FL ecosystem is to understand the viability of FL in a competitive market. As discussed above, the market dynamics are parameterized by $\theta$, $\{r_{j}\}_{j=1}^{n}$ and $\{\nu_{j}\}_{j=1}^{n}$. The effect of FL on model performance is parameterized by $\{Q_{j}\}_{j=1}^{n}=\{S_{j}\}_{j=1}^{n}$. The resulting market shares $\{MS_{i}^{\prime}\}_{i=1}^{n}$ after the FL model training are functions of these parameters.

As formally analyzed in the "Market Stability" subsection, for every FL-PT $i\in \mathcal{C}$, there is a tight lower bound of the relative ML model performance improvement, which is denoted by $\hat{Q}_{i}^{\min}$, to maintain the market $\delta$-stability. Formally, the market is $\delta$-stable if and only if $Q_{i}\geqslant \hat{Q}_{i}^{\min}, \forall i\in \mathcal{C}$. Mathematically, it is possible that $\hat{Q}_{i}^{\min}<0$. This means that the market share reduction of FL-PT $i$ will not exceed $\delta$ even if $i$ does not obtain model performance improvement in the FL model training process. {However, in a real FL ecosystem, each FL-PT $i$ can achieve some level of model performance improvement, {\em i.e.,} $Q_{i}\geqslant 0$.} Thus, we let
\begin{align}\label{equa-Q-i-min}
Q_{i}^{\min} = \max\left\{\hat{Q}_{i}^{\min}, 0\right\} \geqslant 0, \forall i\in \mathcal{C}
\end{align}
and the market $\delta$-stability is achievable if and only if the following relation is satisfied:
\begin{align}\label{equa-abstract-bound}
    Q_{i}\geqslant Q_{i}^{\min}, \forall i\in \mathcal{C}.
\end{align}
We now define an index to measure the friendliness of market environments towards FL. Let
    \begin{align}\label{equa-allocation-decomposition}
      Q_{i} = Q_{i}^{\min} + y_{i},  \forall i\in \mathcal{C}.
    \end{align}

\begin{definition}\label{def-friendliness}
Suppose that we are given the bounds $\{Q_{i}^{\min}\}_{i=1}^{n}$ and the decision variables are $\{Q_{i}\}_{i=1}^{n}$. The level of market friendliness towards FL, $\kappa$, is defined as:
\begin{align}\label{equa-friendliness}
\kappa \delequal 1 - \sum\nolimits_{i=1}^{n}{Q_{i}^{\min}} \overset{(a)}{=} \sum\nolimits_{i=1}^{n}{y_{i}}
\end{align}
where the equality (a) is due to Eq. (\ref{equa-relative-model-quality-property}) and Eq. (\ref{equa-allocation-decomposition}).
\end{definition}

\begin{figure}[t]
\centering
\subfigure[$\kappa=1-\sum_{i=1}^{2}{Q_{i}^{\min}}=0.4$]{\includegraphics[width=0.28\columnwidth]{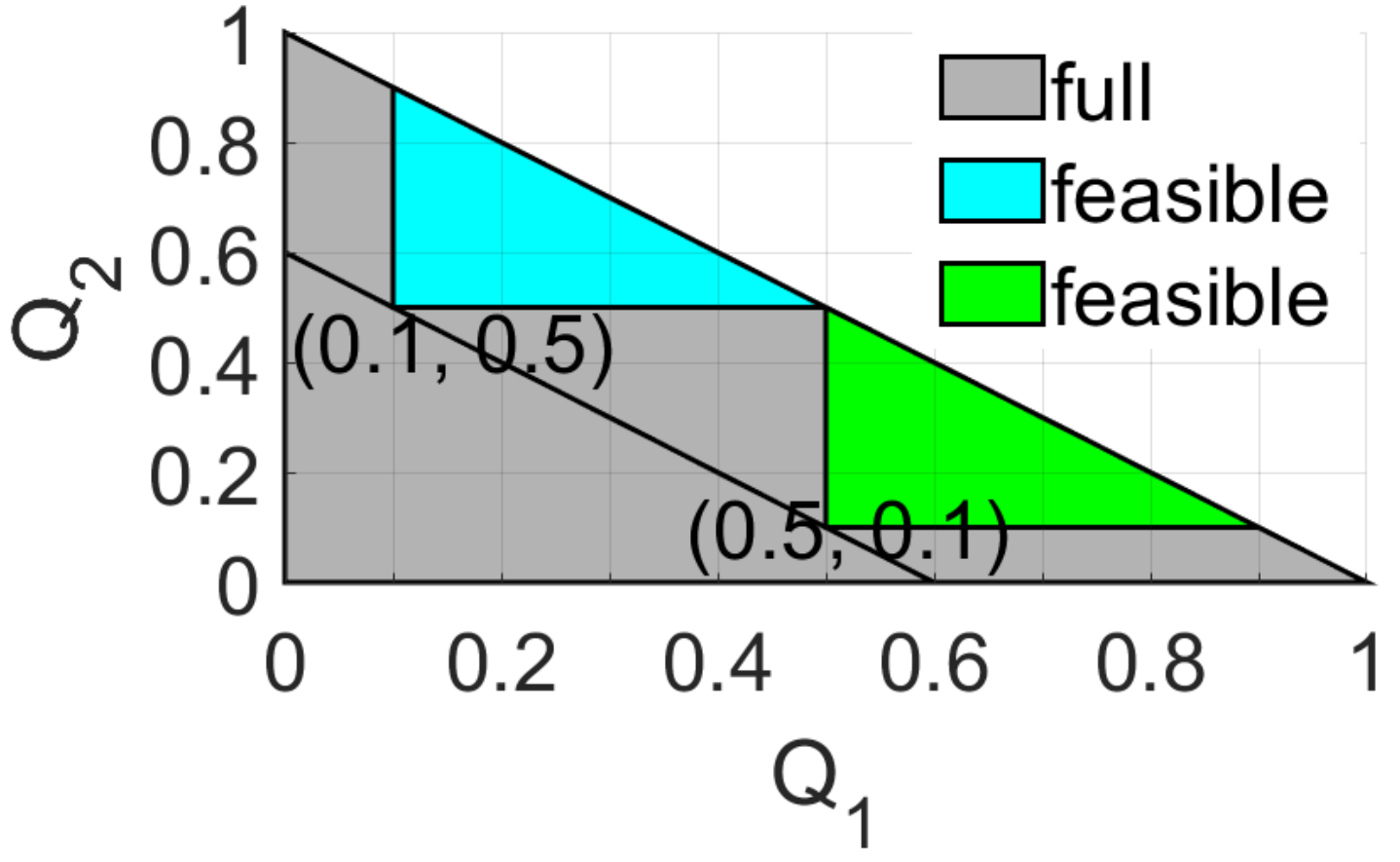}\label{Fig-Friendliness-Small}}
\hspace{2cm}\subfigure[$\kappa=1-\sum_{i=1}^{2}{Q_{i}^{\min}}=0.8$]{\includegraphics[width=0.28\columnwidth]{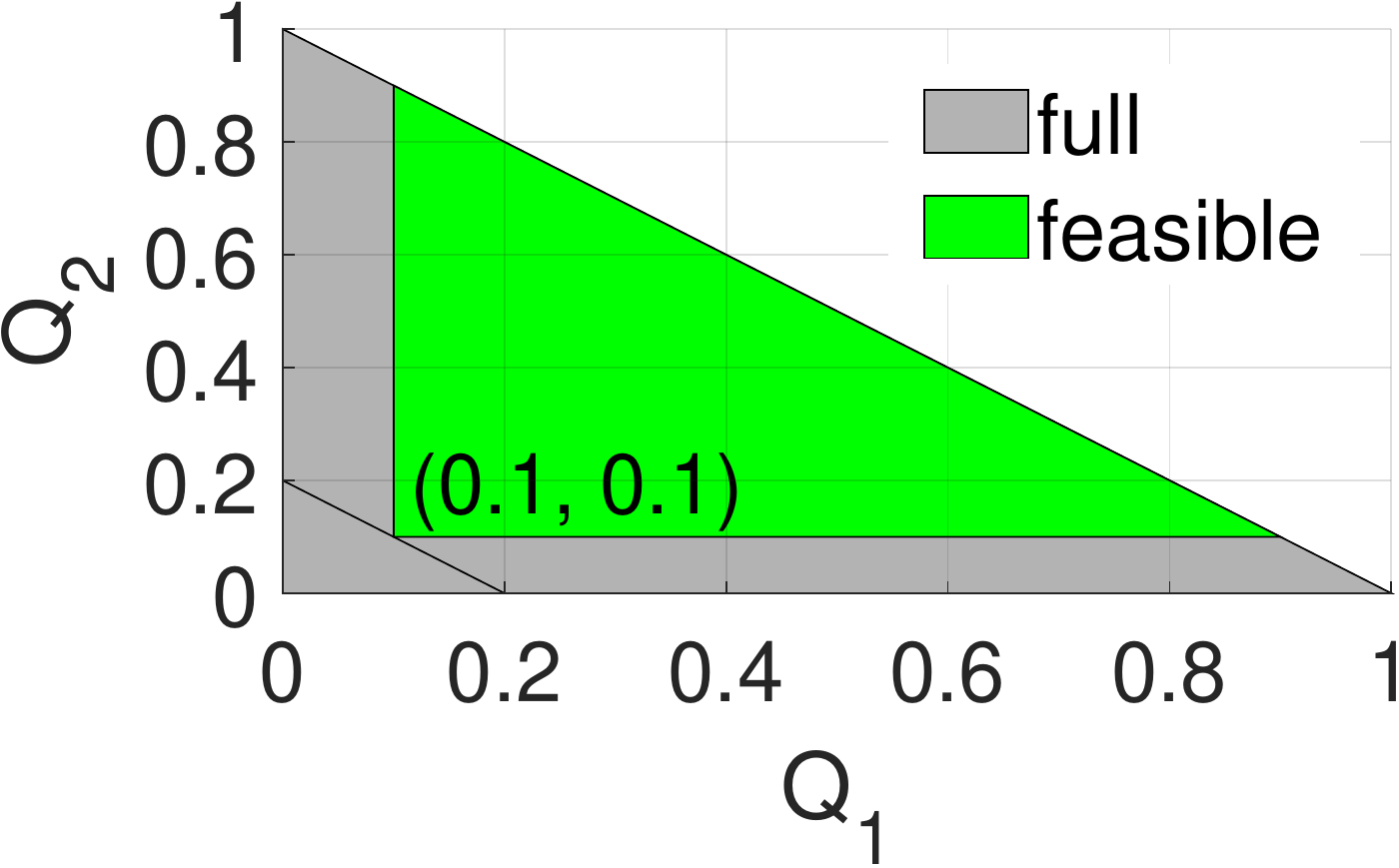}\label{Fig-Friendliness-Large}}
\caption{Friendliness of market environments to FL ($n=2$).}
\label{Fig-Friendliness}
\end{figure}

The value of $\kappa$ indicates the viability of FL. If $\kappa\in [0, 1]$, then the market $\delta$-stability can be maintained and FL is viable in a competitive market. If $\kappa\notin [0, 1]$, then FL is unviable. The explanation is as follows. If $\kappa\in [0, 1]$, then there exists a feasible allocation of $\{y_{i}\}_{i=1}^{n}$ among FL-PTs such that $y_{i} \geqslant 0$, $\forall i\in \mathcal{C}$; further, the relation (\ref{equa-abstract-bound}) is satisfied by Eq. (\ref{equa-allocation-decomposition}) and the market $\delta$-stability is maintained. Thus, FL is viable. The value of $\kappa$ determines the decision space size of an FL designer; the larger the value of $\kappa$, the more friendly a market is towards FL:
    \begin{itemize}
    \item When $\kappa=1$, then $\sum\nolimits_{i=1}^{n}{Q_{i}^{\min}} = 0$. Further, by (\ref{equa-Q-i-min}), we have $Q_{i}^{\min} = 0$, $\forall i\in\mathcal{C}$. This indicates that any FL framework can satisfy the market $\delta$-stability, since Eq. (\ref{equa-abstract-bound}) holds naturally.

    \item The case of $\kappa\in (0, 1)$ is illustrated in Figures~\ref{Fig-Friendliness}\subref{Fig-Friendliness-Small} and \ref{Fig-Friendliness}\subref{Fig-Friendliness-Large} where the grey triangle denotes the full region in which each point represents a possible pair $(Q_{1}, Q_{2})$. Each colored triangle (blue or green) denotes the feasible region {that is the decision space of an FL designer and} in which each point corresponds to a feasible pair $(Q_{1}, Q_{2})$ that can achieve the market $\delta$-stability. In Figure~\ref{Fig-Friendliness}\subref{Fig-Friendliness-Small}, we illustrate the cases with $\kappa=0.4$, but different $(Q_{1}^{\min}, Q_{2}^{\min})$ values. In Figure~\ref{Fig-Friendliness}\subref{Fig-Friendliness-Large}, we illustrate the case with a larger $\kappa=0.8$.

    \item When $\kappa=0$, then $\sum\nolimits_{i=1}^{n}{Q_{i}^{\min}}=1$. By Eq. (\ref{equa-relative-model-quality-property}), we have $\sum\nolimits_{i=1}^{n}{Q_{i}}=\sum\nolimits_{i=1}^{n}{Q_{i}^{\min}}$; the only feasible solution that can satisfy the relation (\ref{equa-abstract-bound}) is $Q_{i}=Q_{i}^{\min}$, $\forall i\in \mathcal{C}$. Then, the FL framework has to strictly satisfy this solution in order to achieve the market $\delta$-stability.
    \end{itemize}
In contrast, if $\kappa<0$, then $\sum\nolimits_{i=1}^{n}{y_{i}}<0$: there exists an FL-PT $i\in\mathcal{C}$ such that $y_{i}<0$. This indicates that the relation (\ref{equa-abstract-bound}) cannot be satisfied.
If $\kappa>1$, then $\sum\nolimits_{i=1}^{n}{Q_{i}^{\min}}<0$, which contradicts Eq. (\ref{equa-Q-i-min}).

In the next section, we will analyze the dominant strategy equilibrium of the game above, the bound $Q_{i}^{\min}$ for all $i\in\mathcal{C}$ and the friendliness $\kappa$.


\section{FL-PT Behaviour and Achievability of Market's $\delta$-Stability}


\subsection{A Dominant Strategy Equilibrium}


\begin{proposition}\label{proposi-dominant-strategy}
The dominant strategy equilibrium of the non-cooperative game is $x^{\ast}=(D^{1}, D^{2}, \cdots, D^{n})$.
\end{proposition}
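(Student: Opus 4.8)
The plan is to verify the dominant-strategy condition in Eq.~\eqref{equa-dominant-strategy} directly. I would fix an arbitrary FL-PT $i$ together with an arbitrary strategy profile $x_{-i}\in\mathcal{X}_{-i}$ of the remaining players, and show that $i$'s payoff $MS_{i}^{\prime}(x_{i}, x_{-i})$ is non-decreasing in its own decision variable $x_{i}$ over $\mathcal{X}_{i}=[0,D^{i}]$. If this holds, the maximum over $x_{i}$ is attained at the endpoint $x_{i}=D^{i}$ for every $i$ and every $x_{-i}$ simultaneously, which is exactly the assertion that $x^{\ast}=(D^{1},\dots,D^{n})$ is a dominant strategy equilibrium. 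The argument then splits into a monotone reduction from $MS_{i}^{\prime}$ to the relative quantity $S_{i}$, followed by a monotonicity statement for $S_{i}$ in $x_{i}$.

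First I would isolate the dependence on $x_{i}$ in the market-share formula~\eqref{equa-changed-market-share}. The denominator $(1+\theta)-\sum_{j}\nu_{j}MS_{j}$ is a strictly positive constant by Eq.~\eqref{equa-new-size} that involves no strategy $x_{j}$. In the numerator, the term $r_{i}MS_{i}$ is fixed and the coefficient multiplying $S_{i}$, namely $\sum_{j\in\mathcal{C}}(1-r_{j}-\nu_{j})MS_{j}+\theta$, is a non-negative constant by Eq.~\eqref{equa-range-r-nu} together with $\theta\geqslant 0$. Hence $MS_{i}^{\prime}$ is an affine, non-decreasing function of $S_{i}$, and since $S_{i}=Q_{i}$ by Eq.~\eqref{equa-def-relative-service-quality}, it suffices to prove that $Q_{i}=d_{i}/\sum_{j}d_{j}$ is maximized at $x_{i}=D^{i}$ for each fixed $x_{-i}$.

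For the second part, Assumption~\ref{assump-more-data-contribution} gives that $L_{i}^{\prime}$ is strictly decreasing in $x_{i}$, so $d_{i}=L_{i}-L_{i}^{\prime}$ is strictly increasing in $x_{i}$, recalling that $L_{i}=L_{i}(D^{i})$ is independent of the strategies. I expect this to be the main obstacle, because $Q_{i}$ is a ratio and the denominator $\sum_{j}d_{j}$ also contains the rivals' improvements $d_{j}$ for $j\neq i$, which may themselves vary with $x_{i}$ through the shared FL training. Concretely, the sign of $\partial Q_{i}/\partial x_{i}$ reduces to comparing $d_{i}^{\prime}\sum_{j\neq i}d_{j}$ against $d_{i}\sum_{j\neq i}d_{j}^{\prime}$, so a naive ``the numerator increases'' argument is insufficient: one must rule out the free-rider effect whereby $i$ might raise its relative share by contributing less and degrading rivals' models faster than its own. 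I would close this gap by appealing to the contribution-reflecting structure of the non-monetary scheme assumed in the model, formalized so that increasing one's own committed data cannot decrease one's own relative improvement $Q_{i}$ for any fixed $x_{-i}$; combined with the strict monotonicity of $d_{i}$ from Assumption~\ref{assump-more-data-contribution}, this yields that $Q_{i}$, and therefore $MS_{i}^{\prime}$, is non-decreasing in $x_{i}$ and maximized at $x_{i}=D^{i}$. Collecting the two parts across all $i$ establishes the claimed equilibrium.
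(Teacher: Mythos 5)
Your reduction of the problem to the monotonicity of $Q_{i}$ in $x_{i}$ is exactly right and matches the paper's route: the denominator of Eq.~\eqref{equa-changed-market-share} is a positive constant independent of the strategies, the coefficient of $S_{i}$ in the numerator is non-negative by Eq.~\eqref{equa-range-r-nu} and $\theta\geqslant 0$, and $S_{i}=Q_{i}$ by Eq.~\eqref{equa-def-relative-service-quality}, so everything hinges on showing $Q_{i}(x_{i}^{\ast},x_{-i})\geqslant Q_{i}(x_{i}^{\prime},x_{-i})$ for every alternative $x_{i}^{\prime}$ and every fixed $x_{-i}$. You also correctly identify where the difficulty lies: the denominator $\sum_{j}d_{j}$ of $Q_{i}$ contains the rivals' improvements, which could in principle move with $x_{i}$.

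However, your way of closing that gap is circular. You propose to invoke a ``contribution-reflecting structure \ldots formalized so that increasing one's own committed data cannot decrease one's own relative improvement $Q_{i}$.'' That formalized statement is literally the claim you are trying to prove, so nothing has been established. The paper closes the gap with a different, concrete premise: it asserts that when only $x_{i}$ changes (with $x_{-i}$ held fixed), the final loss values $L_{j}^{\prime}$ of the other FL-PTs $j\neq i$ remain unchanged. Granting that, $d_{j}$ is constant for all $j\neq i$, while $d_{i}$ strictly increases by Assumption~\ref{assump-more-data-contribution}, so the ratio $Q_{i}=d_{i}/\bigl(d_{i}+\sum_{j\neq i}d_{j}\bigr)$ strictly increases (and each $Q_{j}$, $j\neq i$, decreases); the affine dependence of $MS_{i}^{\prime}$ on $Q_{i}$ then finishes the argument. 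Whether that invariance of $L_{j}^{\prime}$ in $x_{i}$ is itself fully warranted by the model --- $L_{j}^{\prime}$ is declared to depend on all of $\{x_{k}\}_{k=1}^{n}$ and on the exchange scheme $Trad$ --- is a fair criticism to level at the paper, but it is the specific additional premise you need to state and then exploit through the ratio structure; an appeal to an unformalized property that already encodes the desired monotonicity of $Q_{i}$ does not substitute for it.
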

\begin{proof}
It suffices to show that Eq. (\ref{equa-dominant-strategy}) holds when $x_{i}^{\ast}=D^{i}$. With abuse of notation, we let $Q_{1}$, $Q_{2}$, $\cdots$, $Q_{n}$ ({\em resp.} $Q_{1}^{\prime}$, $Q_{2}^{\prime}$, $\cdots$, $Q_{n}^{\prime}$) denote the relative model performance improvements of the $n$ FL-PTs under the strategy profile $\left(x_{i}^{\ast}, x_{-i}\right)$ ({\em resp.} $\left(x_{i}^{\prime}, x_{-i}\right)$). Based on Assumption~\ref{assump-more-data-contribution}, we have $L_{i}^{\prime}\left( x_{i}^{\ast} \right) < L_{i}^{\prime}\left( x_{i}^{\prime} \right)$. The loss function values of other FL-PTs remain unchanged. Thus, from Eq. \eqref{equa-model-improvement} and Eq. \eqref{equa-relative-model-quality}, we have:
\begin{align}
& Q_{i} > Q_{i}^{\prime} \text{ and } Q_{j} < Q_{j}^{\prime}, \forall j\in \mathcal{C}-\{i\}  \label{equa-Q-prime}
\end{align}
Let
\begin{equation}\label{ineq-2}
\begin{split}
 \varepsilon_{i} = Q_{i} - Q_{i}^{\prime} \overset{(a)}{>} 0.
\end{split}
\end{equation}
where the inequality (a) is due to Eq. \eqref{equa-Q-prime}. Finally,
\begin{equation*}
 MS_{i}^{\prime}(x_{i}^{\ast}, x_{-i}) - MS_{i}^{\prime}(x_{i}^{\prime}, x_{-i})
\overset{(c)}{=}  \frac{\sum\nolimits_{j\in \mathcal{C}}{(1-r_{j}-\nu_{j})MS_{j}\varepsilon_{i}} + \varepsilon_{i} \theta}{1+\theta-\sum\nolimits_{j=1}^{n}{\nu_{j}MS_{j}}} \overset{(d)}{\geqslant} 0
\end{equation*}
The equality (c) is due to Eq. \eqref{equa-def-relative-service-quality} and Eq. \eqref{equa-changed-market-share}; the inequality (d) is due to Eq. \eqref{equa-range-r-nu}, Eq. \eqref{equa-new-size} and Eq. \eqref{ineq-2}.
\end{proof}

In a competitive market, each FL-PT aims to maximize its market share. By Proposition~\ref{proposi-dominant-strategy}, an FL-PT's market share is maximized when it uses all of its local data for FL training, regardless of others' strategies. Ideally, under the dominant strategy solution, the FL ecosystem will also produce the best global ML model.

\subsection{Market Stability}
\label{sec.general-case}

We define the following factors to represent the overall market dynamics and show that the lower bound $Q_{i}^{\min}$ in (\ref{equa-abstract-bound}) is governed by some of these factors under a simple mathematical structure. Let
\begin{equation}\label{equa-overall-market-dynamics}
    \begin{split}
        v_{o} = \sum_{i=1}^{n}{v_{i}MS_{i}},\enskip e = 1+\theta-v_{o},\enskip f_{o} = \frac{\theta + \sum_{i=1}^{n}{(1-r_{j}-\nu_{i})MS_{i}}}{e},\enskip \text{and}\enskip \hat{r}_{i} = \frac{r_{i}MS_{i}}{e}
    \end{split}
\end{equation}
After the FL training process, we have
\begin{itemize}
\item $v_{o}$ represents the rate at which the customers leave the market (i.e., the total number of such customers is $v_{o}P$).

\item $(e \times P)$ represents the total number of customers in the market by Eq. (\ref{equa-new-size}). We refer to $e\in (0, +\infty)$ as \textbf{the expanded scale of the market} in relation to the original population size $P$.

\item $r_{i}MS_{i}P$ is the number of loyal customers of FL-PT $i$ in the original market. These customers will continue to be served by FL-PT $i$ after the FL training process. $\hat{r}_{i}$ is the ratio of $r_{i}MS_{i}P$ to $eP$. We refer to $\hat{r}_{i}$ as \textbf{the proportion of old customers of FL-PT $i$} in relation to the new population size $eP$.

\item $(1-r_{j}-\nu_{i})MS_{i}P$ is the number of free customers of FL-PT $i$ who will reconsider which FL-PT to join after FL model training. $\theta P$ is the number of new customers who consider joining one of the $n$ FL-PTs. Thus, $\theta + \sum_{i=1}^{n}{(1-r_{j}-\nu_{i})MS_{i}} \times P$ represents the total number of customers who eventually joins one FL-PT. We refer to $f_{o}$ as \textbf{the proportion of vacillating customers} in relation to the new population size $eP$.
\end{itemize}

Suppose we are given an arbitrary FL training framework in which each FL-PT $i\in \mathcal{C}$ uses $x_{i}$ local data samples for FL model training. This process determines the values of $\{Q_{1}, \dots, Q_{n}\}$ via Eq. \eqref{equa-relative-model-quality}. The proposition below shows that the $\delta$-stability of the market is achieved when $\{Q_{1}, \dots, Q_{n}\}$ satisfy a particular relation with the original market status and dynamics.

\begin{proposition}\label{proposi-general-result}
Define
\begin{align}
    \hat{Q}_{i}^{\min} = \frac{(MS_{i}-\delta) - \hat{r}_{i}}{f_{o}}.
\end{align}
The market stability is achieved if and only if $Q_{i} \geqslant Q_{i}^{\min}, \forall i\in\mathcal{C}$, where $Q_{i}^{\min}=\max\left\{\hat{Q}_{i}^{\min},\, 0\right\}$.
\end{proposition}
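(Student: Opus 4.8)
The plan is to collapse the rational expression for $MS_i'$ in Eq.~\eqref{equa-changed-market-share} into a simple affine function of $Q_i$ using the aggregate quantities defined in Eq.~\eqref{equa-overall-market-dynamics}, after which the $\delta$-stability condition becomes a single linear inequality in $Q_i$ that I can solve directly.

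First I would substitute $S_i = Q_i$ (Eq.~\eqref{equa-def-relative-service-quality}) into Eq.~\eqref{equa-changed-market-share} and observe that the denominator $(1+\theta) - \sum_j \nu_j MS_j$ is exactly $e$, while the bracket multiplying $Q_i$ in the numerator, namely $\theta + \sum_j (1-r_j-\nu_j)MS_j$, is exactly $e f_o$, and the remaining constant term $r_i MS_i$ equals $e\,\hat r_i$. Dividing numerator and denominator through by $e$ then yields the clean identity
$$MS_i' = \hat r_i + Q_i f_o.$$
This is the crux of the argument: the post-FL market share of FL-PT $i$ depends on its own relative improvement $Q_i$ only, and does so linearly, with slope $f_o$ and intercept $\hat r_i$.

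Next I would feed this into Definition~\ref{def-market-stability}. The market variance is $V_i = MS_i - MS_i' = (MS_i - \hat r_i) - Q_i f_o$, so the requirement $V_i \leqslant \delta$ is equivalent to $(MS_i - \delta) - \hat r_i \leqslant Q_i f_o$, and dividing by $f_o$ gives $Q_i \geqslant \hat Q_i^{\min}$. Because these constraints are decoupled across FL-PTs, the market is $\delta$-stable precisely when every one of them holds simultaneously. Finally, since each $Q_i \in [0,1]$ satisfies $Q_i \geqslant 0$ by Eq.~\eqref{equa-relative-model-quality-property}, the inequality $Q_i \geqslant \hat Q_i^{\min}$ is equivalent to $Q_i \geqslant \max\{\hat Q_i^{\min},\, 0\} = Q_i^{\min}$, exactly as anticipated around Eq.~\eqref{equa-Q-i-min}; this delivers the stated form of the proposition.

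The main obstacle is a positivity/sign check rather than any heavy computation. Dividing the inequality by $f_o$ preserves its direction only if $f_o > 0$, and the affine identity $MS_i' = \hat r_i + Q_i f_o$ itself presupposes $e > 0$. Positivity of $e$ is guaranteed by Eq.~\eqref{equa-new-size}, and $f_o \geqslant 0$ follows from $\theta \geqslant 0$, $1 - r_j - \nu_j \geqslant 0$ (Eq.~\eqref{equa-range-r-nu}) and $MS_j > 0$. I would flag that $f_o = 0$ is the degenerate case in which no customer is free to switch and no new customer enters, so that $Q_i$ has no bearing on market share and $\hat Q_i^{\min}$ is undefined; this case must be excluded (or treated separately) for the stated bound to make sense. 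Confirming these inequalities, and in particular verifying that the numerator regrouping reproduces $e f_o$ and $e\,\hat r_i$ exactly, is where I would concentrate the care.
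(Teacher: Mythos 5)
Your proposal is correct and follows essentially the same route as the paper: substitute $S_i=Q_i$ into Eq.~\eqref{equa-changed-market-share}, impose $V_i\leqslant\delta$ from Definition~\ref{def-market-stability}, and rearrange using the aggregates of Eq.~\eqref{equa-overall-market-dynamics}; your affine identity $MS_i'=\hat r_i+Q_i f_o$ is just a cleaner packaging of the paper's rearranged inequality. Your explicit check that $e>0$ and $f_o>0$ (so the division preserves the inequality's direction) is a point of care the paper's proof leaves implicit, but it does not change the argument.
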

\begin{proof}
To achieve market stability, based on Eq. \eqref{equa-changed-market-share} and Definition~\ref{def-market-stability}, we have for each FL-PT $i\in \mathcal{C}$ that:
\begin{equation*}\label{equa-general-bound}
V_{i} = MS_{i} - MS_{i}^{\prime} = MS_{i} - \frac{r_{i}MS_{i} + S_{i}\sum\nolimits_{j\in \mathcal{C}}{(1-r_{j}-\nu_{j})MS_{j}} + S_{i} \theta}{(1+\theta)-\sum_{j=1}^{n}{\nu_{j}MS_{j}}} \leqslant \delta.
\end{equation*}
Further, we have
\begin{align*}
    S_{i} \geqslant \frac{(MS_{i}-\delta)\left( 1 + \theta - \sum_{j=1}^{n}{v_{j}MS_{j}} \right) - r_{i}MS_{i}}{\theta + \sum_{j=1}^{n}{(1-r_{j}-v_{j})MS_{j}}}
\end{align*}
By Eq. \eqref{equa-def-relative-service-quality}, we have $S_{i}=Q_{i}$. Further, based on Eq. \eqref{equa-abstract-bound} and Eq. \eqref{equa-overall-market-dynamics}, Proposition~\ref{proposi-general-result}.
\end{proof}

Proposition~\ref{proposi-general-result} provides the minimum relative ML model performance improvements $\left\{Q_{i}^{\min}\right\}_{i=1}^{n}$ required to maintain the market $\delta$-stability. The lower bound $Q_{i}^{\min}$ of FL-PT $i$ is governed by its own market features and the overall market dynamics, including the original market share $MS_{i}$, the proportion $\hat{r}_{i}$ of old customers of FL-PT $i$, and the proportion $f_{o}$ of the vacillating customers of all FL-PTs.
For FL-PTs with $\hat{Q}_{i}^{\min}\leqslant 0$, their $\delta$-stability can be maintained even if they do not benefit from the FL training process and their ML model performance is not improved. Conversely, we refer to an FL-PT $i\in\mathcal{C}$ with $\hat{Q}_{i}^{\min}>0$ as {\em a sensitive FL-PT}, and its ML model performance has to be improved to maintain the $\delta$-stability. The set of sensitive FL-PTs is denoted by $\mathcal{C}^{\prime}$ and we let $n^{\prime}=|\mathcal{C}^{\prime}|$. Next, by Definition~\ref{def-friendliness}, we can obtain the friendliness towards FL under any given market environment.


\begin{proposition}\label{Proposi-friendliness}
The friendliness of market environment to FL is:
\begin{align*}
\kappa = 1 - \sum\limits_{i=1}^{n}{\max\left\{ 0, \frac{(MS_{i}-\delta) - \hat{r}_{i}}{f_{o}} \right\}} = 1 - \frac{\sum\nolimits_{i\in\mathcal{C}^{\prime}}{\left(MS_{i} - \hat{r}_{i}\right)} - n^{\prime}\delta}{f_{o}}
\end{align*}
\end{proposition}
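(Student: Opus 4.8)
The plan is to establish both equalities in Proposition~\ref{Proposi-friendliness} by direct substitution followed by a short case analysis, using only Definition~\ref{def-friendliness} and the closed form of $Q_i^{\min}$ from Proposition~\ref{proposi-general-result}. First I would recall that Definition~\ref{def-friendliness} gives $\kappa = 1 - \sum_{i=1}^{n} Q_i^{\min}$, while Proposition~\ref{proposi-general-result} supplies $Q_i^{\min} = \max\{\hat{Q}_i^{\min},\,0\}$ with $\hat{Q}_i^{\min} = \frac{(MS_i - \delta) - \hat{r}_i}{f_o}$. Substituting the second expression into the first immediately produces the first claimed equality
\begin{align*}
\kappa = 1 - \sum_{i=1}^{n} \max\left\{0,\ \frac{(MS_i - \delta) - \hat{r}_i}{f_o}\right\}.
\end{align*}

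Next I would exploit the definition of a \emph{sensitive} FL-PT, namely an FL-PT $i$ with $\hat{Q}_i^{\min} > 0$, which are exactly the elements of the set $\mathcal{C}'$ with $n' = |\mathcal{C}'|$. The key observation is that the $\max$ operator partitions the $n$ summands: for $i \in \mathcal{C}'$ the $i$-th term equals $\hat{Q}_i^{\min}$, whereas for $i \notin \mathcal{C}'$ it equals $0$ and hence drops out. (If one wishes to characterize $\mathcal{C}'$ by the sign of the numerator $(MS_i - \delta) - \hat{r}_i$ alone, this is where $f_o > 0$, which follows from the market-dynamics quantities in Eq.~\eqref{equa-overall-market-dynamics}, is invoked; for the identity itself, however, only the definition of $\mathcal{C}'$ is needed.) Consequently the sum collapses to $\sum_{i\in\mathcal{C}'} \frac{(MS_i - \delta) - \hat{r}_i}{f_o}$.

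Finally I would factor out the common denominator $f_o$ and isolate the $\delta$ contribution. Since a term $-\delta$ appears once for each sensitive FL-PT and there are $n'$ of them, the $\delta$-terms sum to $-n'\delta$, so the numerator becomes $\sum_{i\in\mathcal{C}'}(MS_i - \hat{r}_i) - n'\delta$, which yields the second equality. I expect no genuine obstacle here: the entire argument is a substitution together with elementary bookkeeping of which summands survive the $\max$. The only points requiring mild care are keeping the tally of the $\delta$-terms consistent with $n' = |\mathcal{C}'|$ and, if the numerator-based description of $\mathcal{C}'$ is used, noting $f_o > 0$ so that the sign of $\hat{Q}_i^{\min}$ is governed solely by its numerator.
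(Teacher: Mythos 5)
Your proposal is correct and follows exactly the route the paper intends: the paper's own proof is the one-line remark that the result ``follows from Definition~\ref{def-friendliness} and Proposition~\ref{proposi-general-result},'' and your argument simply spells out that substitution and the bookkeeping over the sensitive set $\mathcal{C}'$. No discrepancy in approach.
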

\begin{proof}
It follows from Definition~\ref{def-friendliness} and Proposition~\ref{proposi-general-result}.
\end{proof}

By Proposition \ref{Proposi-friendliness}, $\kappa$ is governed by the number $n^{\prime}$ of sensitive FL-PTs, the total market share $\sum\nolimits_{i\in\mathcal{C}^{\prime}}{MS_{i}}$ of these sensitive FL-PTs $\mathcal{C}^{\prime}$, the total proportion $\sum\nolimits_{i\in\mathcal{C}^{\prime}}{\hat{r}_{i}}$ of the old customers of $\mathcal{C}^{\prime}$, and the proportion $f_{o}$ of vacillating customers. It is observed that the friendliness of market environments to FL simply by the four factors that reflect the overall market dynamics, instead of the individual's market features.

\begin{proposition}\label{Proposi-viability}
FL is viable in a competitive market if and only if $\frac{\left(\sum\nolimits_{i\in\mathcal{C}^{\prime}}{MS_{i} - \hat{r}_{i}}\right) - f_{o}}{|\mathcal{C}^{\prime}|} \leqslant \delta$.
\end{proposition}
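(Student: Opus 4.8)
The plan is to reduce the viability condition to a single inequality on $\kappa$ and then unfold the closed form supplied by Proposition~\ref{Proposi-friendliness}. Recall from the discussion following Definition~\ref{def-friendliness} that FL is viable in a competitive market precisely when $\kappa\in[0,1]$. The first fact I would record is that the upper constraint $\kappa\leqslant 1$ is automatic: by Eq.~(\ref{equa-Q-i-min}) each $Q_{i}^{\min}\geqslant 0$, so $\sum_{i=1}^{n}{Q_{i}^{\min}}\geqslant 0$ and hence $\kappa=1-\sum_{i=1}^{n}{Q_{i}^{\min}}\leqslant 1$. Consequently, viability is equivalent to the single requirement $\kappa\geqslant 0$, which is the lever the whole argument turns on.

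Next I would substitute the explicit expression for $\kappa$ from Proposition~\ref{Proposi-friendliness},
\begin{align*}
\kappa = 1 - \frac{\sum\nolimits_{i\in\mathcal{C}^{\prime}}{(MS_{i}-\hat{r}_{i})} - n^{\prime}\delta}{f_{o}},
\end{align*}
so that $\kappa\geqslant 0$ reads $\sum\nolimits_{i\in\mathcal{C}^{\prime}}{(MS_{i}-\hat{r}_{i})} - n^{\prime}\delta \leqslant f_{o}$. Since $f_{o}>0$ (it is the proportion of vacillating customers relative to the strictly positive expanded scale $e$, per Eq.~(\ref{equa-overall-market-dynamics})), clearing the denominator preserves the inequality direction. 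Isolating the $\delta$ term and dividing by $n^{\prime}=|\mathcal{C}^{\prime}|$ then yields exactly $\frac{\left(\sum\nolimits_{i\in\mathcal{C}^{\prime}}{(MS_{i}-\hat{r}_{i})}\right) - f_{o}}{|\mathcal{C}^{\prime}|}\leqslant\delta$, the claimed criterion. The chain is reversible at every step, so it gives the ``if and only if'' directly.

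The only genuinely delicate points, rather than real obstacles, are the two positivity/non-degeneracy facts invoked in the manipulation: that $f_{o}>0$, so the inequality is not reversed when clearing the fraction, and that $\mathcal{C}^{\prime}\neq\emptyset$, so dividing by $|\mathcal{C}^{\prime}|$ is legitimate. The former follows from $e>0$ together with a nonzero mass of free and new customers; the latter is implicit in the statement, since if there are no sensitive FL-PTs then $\sum_{i}{Q_{i}^{\min}}=0$, $\kappa=1$, and FL is trivially viable. I would therefore either adopt the standing assumption $\mathcal{C}^{\prime}\neq\emptyset$ or dispatch the empty case separately as automatic viability. With these two facts in place, the proposition is a one-line equivalence and needs no further computation.
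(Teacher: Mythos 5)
Your proof is correct and follows essentially the same route as the paper, which simply observes that viability is equivalent to non-negativity of $\kappa$ and invokes the closed form from Proposition~\ref{Proposi-friendliness}. You are in fact more careful than the paper's two-line proof: you justify $\kappa\leqslant 1$, the sign of $f_{o}$, and the non-emptiness of $\mathcal{C}^{\prime}$, and your use of $\kappa\geqslant 0$ matches the non-strict inequality in the statement better than the paper's ``$\kappa>0$''.
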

\begin{proof}
FL is viable in a competitive market if and only if $\kappa>0$. The proposition follows from Proposition \ref{Proposi-friendliness}.
\end{proof}


\section{Numerical Studies and Analysis}
In this section, we illustrate the minimum requirement of the relative ML model performance improvement to maintain the market $\delta$-stability, and the friendliness to FL in typical market environments.

\subsection{General Settings}
The number $n$ of FL-PTs can be arbitrary. Different types of markets have different growth rates. We set $\theta$ to 0.1 and 0.5 to simulate a mature market and a fast-growing market, respectively. The average proportion of loyal customers, $r_{o}$, is set to the range of $[0.7, 0.95]$ (i.e., we set $r_{1}=\cdots=r_{n}=r_{o}$ and for $i\in\mathcal{C}$, $r_{i}$ ranges from 0.7 to 0.95 with a stepsize 0.05). The rates at which customers leave an FL-PT and all the $n$ FL-PTs are generally small and set to 0.02 (i.e., $v_{1}=\cdots=v_{n}=v_{o}=0.02$).

\subsection{The Minimum Requirement $\{Q_{i}^{\min}\}_{i=1}^{n}$}

The market share represents the competitiveness of a firm $i\in\mathcal{C}$. We set $MS_{i}$ to the range of $[0.2, 0.8]$ with a stepsize 0.1. The proportion $\hat{r}_{i}$ of old customers and the proportion $f_{o}$ of vacillating customers are computed by Eq. (\ref{equa-overall-market-dynamics}). The minimum ML model performance improvement requirement of an individual FL-PT $i$, $Q_{i}^{\min}$, is illustrated in Figure \ref{Fig-Performance-Bound}. Overall, different FL-PTs have different requirements $Q_{i}^{\min}$ to maintain the market $\delta$-stability based on their market shares, their customer loyalties and the market dynamics such as the market growth rate:
\begin{itemize}
    \item In spite of the market growth rate, we can see from each plot of Figure \ref{Fig-Performance-Bound} that (\rmnum{1}) the higher the market share $MS_{i}$ of an FL-PT $i$, the larger the required value of $Q_{i}^{\min}$, and (\rmnum{2}) the higher the customer loyalty $r_{i}$ of an FL-PT, the smaller the required value of $Q_{i}^{\min}$.
    \item Under the same market share and customer loyalty $(MS_{i}, r_{i})$, the minimum requirement $Q_{i}^{\min}$ in a mature market is lower than its counterpart in a fast-growing market.
\end{itemize}
While designing an FL scheme for FL-PTs in a competitive market, it is important to ensure that model performance improvement achievable for each FL-PT satisfies the minimum requirement $\{Q_{i}^{\min}\}_{i=1}^{n}$.

\begin{figure}[ht]
\centering
\subfigure[Mature market]{\includegraphics[width=0.44\columnwidth]{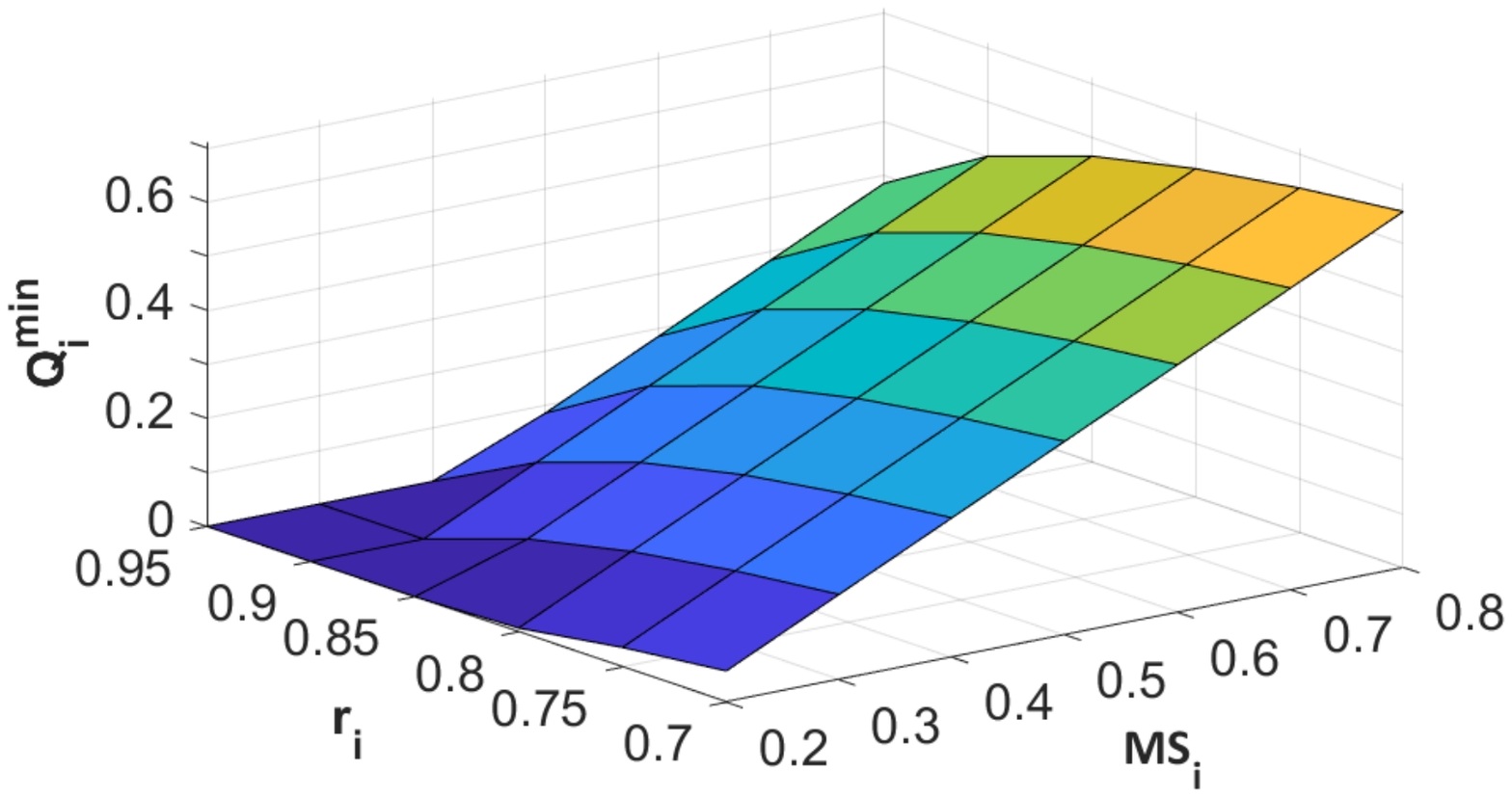}
\label{Fig-Comparable-Slow-Bound}}
\hspace{0.7cm}\subfigure[Fast-growing market]{\includegraphics[width=0.44\columnwidth]{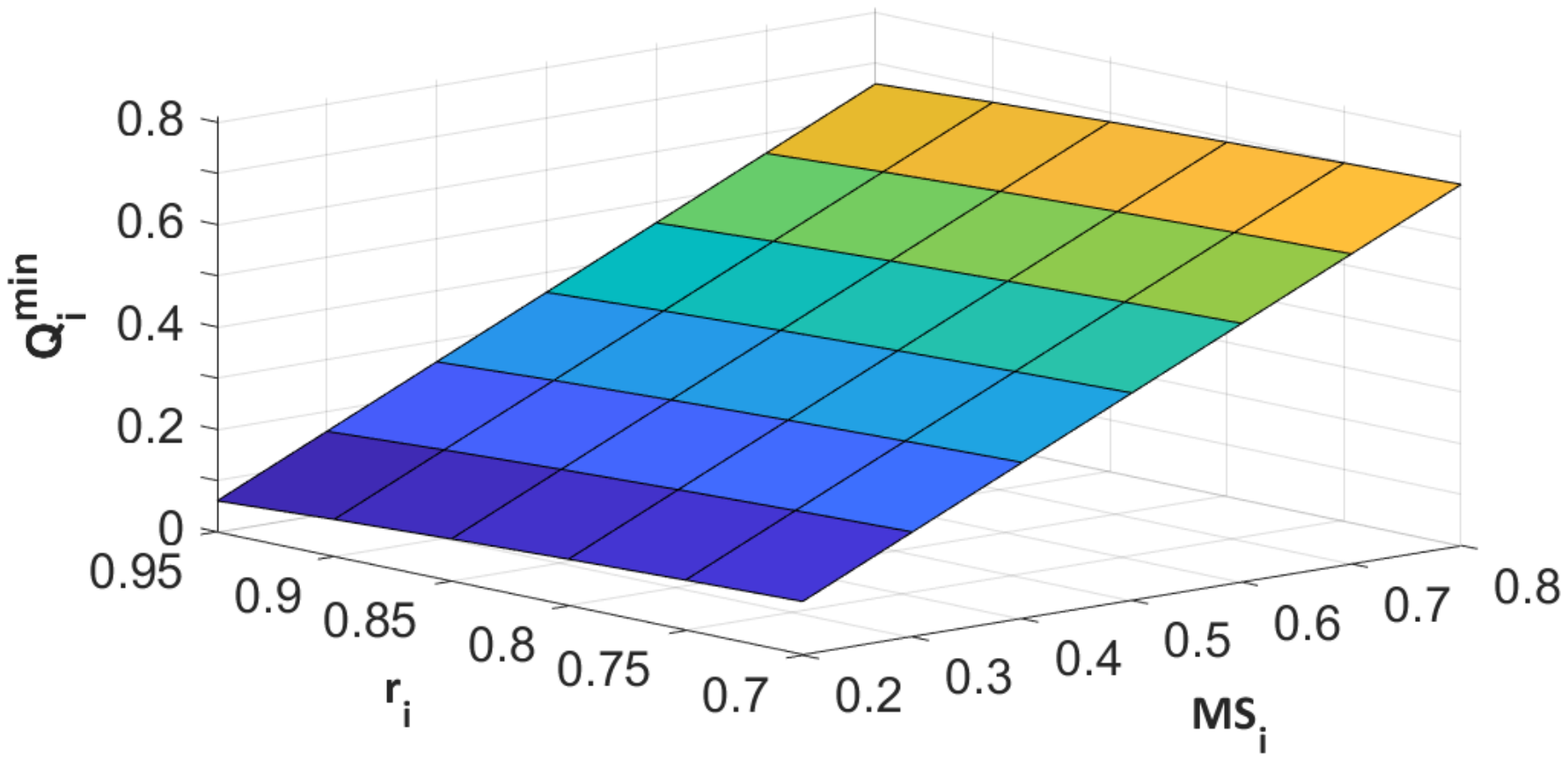}
\label{Fig-Comparable-Fast-Bound}}
\caption{The minimum relative improvement, $Q_{i}^{\min}$, required to maintain the market $\delta$-stability}
\label{Fig-Performance-Bound}
\end{figure}

\subsection{The Friendliness $\kappa$}

Overall, the friendliness, $\kappa$, in Proposition \ref{Proposi-friendliness} reflects how accommodating a competitive market is to FL and its meaning is explained when we define it in Definition~\ref{def-friendliness}. Let $MS_{o}^{\prime}=\sum\nolimits_{i\in\mathcal{C}^{\prime}}{MS_{i}}$. We set $MS_{o}^{\prime}$ to range from 0.2 to 0.8 with a stepsize 0.1. $r_{o}$ represents the proportion of the loyal customers of all FL-PTs in the original market. Figure~\ref{Fig-Friendliness} illustrates the numerical results in a mature and fast-growing market where $(MS_{o}^{\prime}, r_{o})$ satisfies $\sum\nolimits_{i\in\mathcal{C}^{\prime}}{\left(MS_{i} - \hat{r}_{i}\right)} - n^{\prime}\delta = MS_{o}^{\prime}\left(1-\frac{r_{o}}{e} \right) > 0$. Given the overall customer leaving rate $v_{o}$, a high market growth rate $\theta$ leads to a high expanded scale $e$ of the market by Eq. (\ref{equa-overall-market-dynamics}). By Proposition \ref{Proposi-friendliness}, the friendliness $\kappa$ is increasing in the overall customer loyalty $r_{o}$ and the number $n^{\prime}$ of sensitive FL-PTs, and decreasing in the total market share $MS_{o}^{\prime}$ of sensitive FL-PTs and the market growth rate $\theta$, which are also reflected in the six plots of the figure. Encouragingly, $\kappa \in (0, 1)$ in a wide range of market environments; so is the FL viability:
\begin{itemize}
    \item In spite of the market growth rate $\theta$ and the number $n^{\prime}$ of sensitive FL-PTs, it is observed from each plot of Figure \ref{Fig-Friendliness} that $\kappa \in (0, 1)$ under all the feasible pairs $(MS_{o}^{\prime}, r_{o})$.

    \item The friendliness $\kappa$ in a mature market is higher, when the number $n^{\prime}$ of sensitive FL-PTs, the total market share $MS_{o}^{\prime}$ of sensitive FL-PTs, and the overall customer loyalty $r_{o}$ are the same.

    \item From each of Figures \ref{Fig-Friendliness-Slow-1}-\ref{Friendliness-Slow-3}, it is observed in a mature market that the friendliness $\kappa$ is especially high when either the total market share $MS_{o}^{\prime}$ of sensitive FL-PTs is small or the overall customer loyalty $r_{o}$ is high.

    \item From each of Figures \ref{Fig-Friendliness-Fast-1}-\ref{Fig-Friendliness-Fast-3}, it is observed in a fast-growing market that (\rmnum{1}) if the number $n^{\prime}$ of sensitive FL-PTs is small (e.g., $n^{\prime}=1, 2$), the friendliness $\kappa$ is especially high when the total market share $MS_{o}^{\prime}$ of sensitive FL-PTs is small, in spite of the overall customer loyalty $r_{o}$, and (\rmnum{2}) if $n^{\prime}$ is large (e.g, $n^{\prime}=3$), we have the same observation as the mature market above.
\end{itemize}

\begin{figure}[t!]
\centering
\subfigure[Mature market, $n^{\prime}=1$]{\includegraphics[width=0.43\columnwidth]{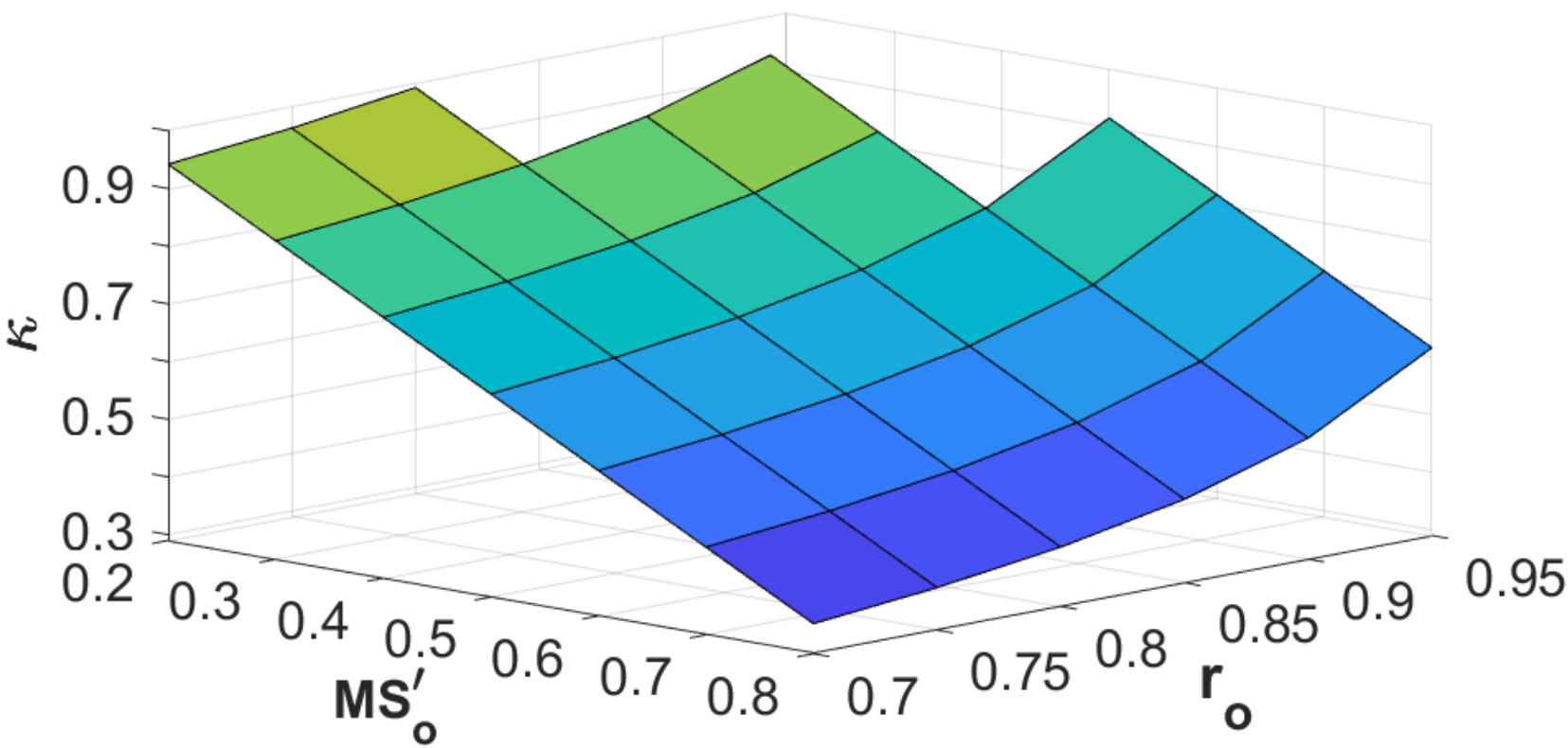}
\label{Fig-Friendliness-Slow-1}}
\hspace{0.7cm}\subfigure[Mature market, $n^{\prime}=2$]{\includegraphics[width=0.43\columnwidth]{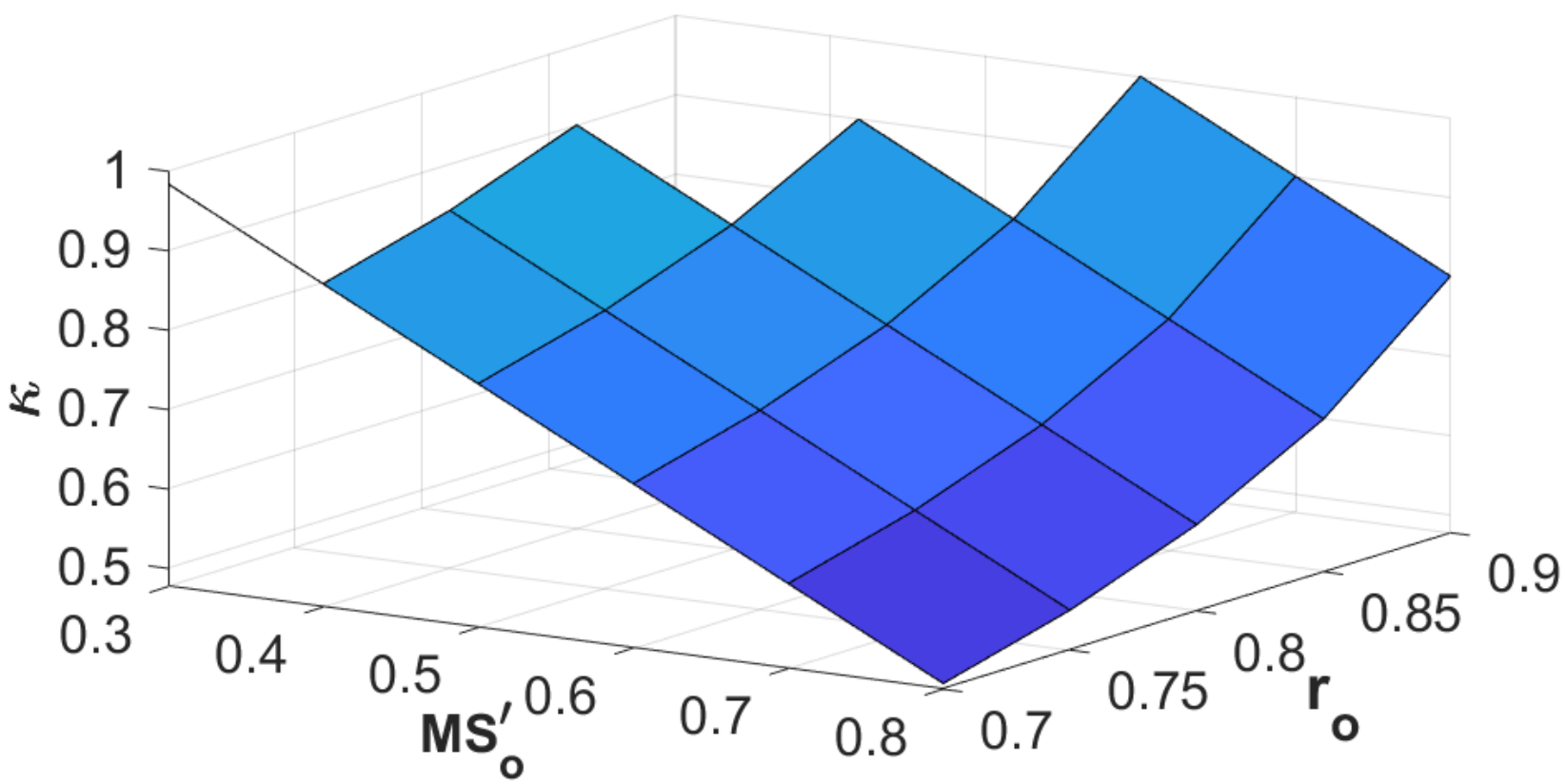}\label{Friendliness-Slow-2}}\\
\subfigure[Mature market, $n^{\prime}=3$]{\includegraphics[width=0.43\columnwidth]{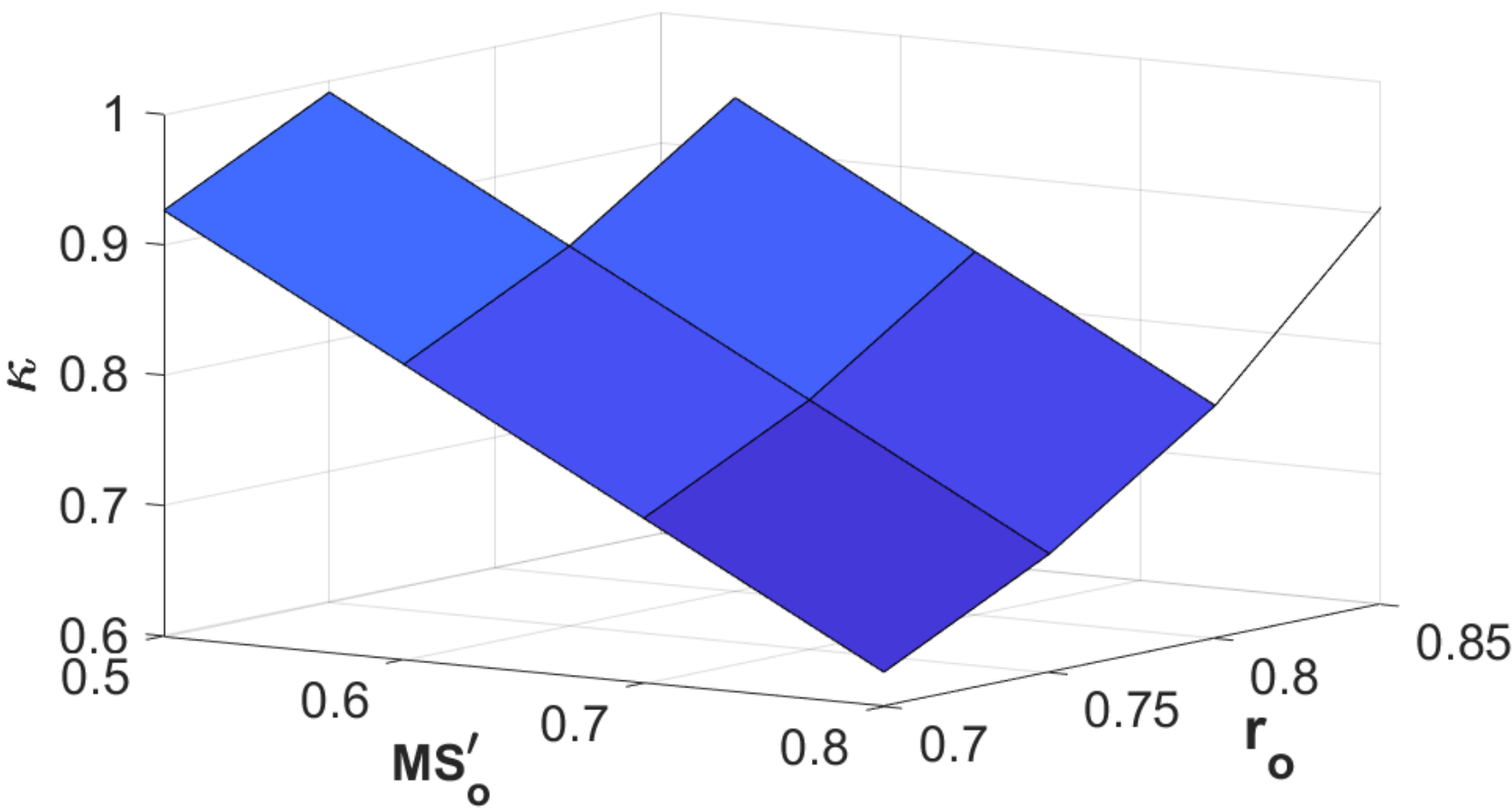}\label{Friendliness-Slow-3}}
\hspace{0.7cm}\subfigure[Fast-growing market, $n^{\prime}=1$]{\includegraphics[width=0.43\columnwidth]{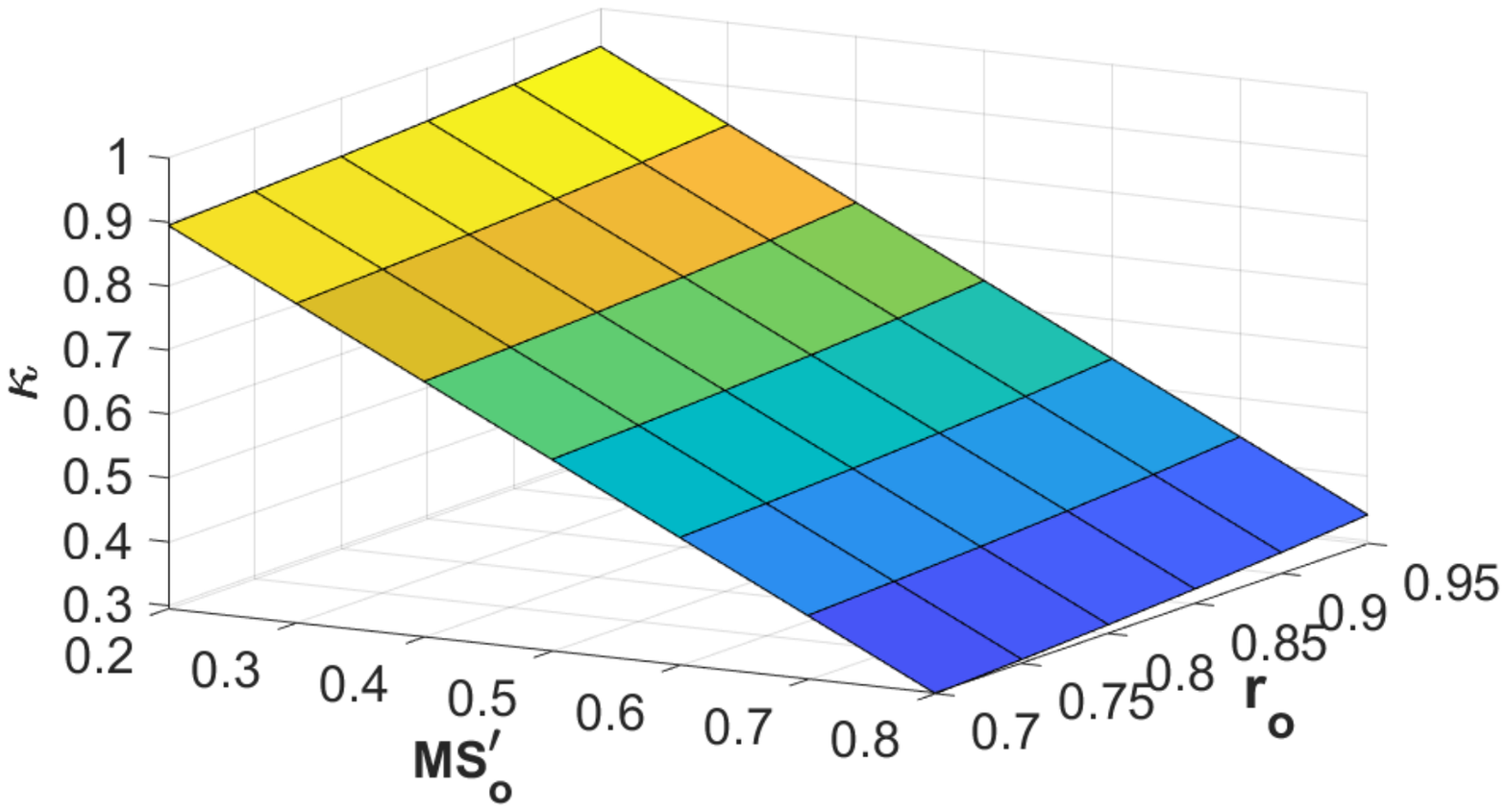}
\label{Fig-Friendliness-Fast-1}}\\
\subfigure[Fast-growing market, $n^{\prime}=2$]{\includegraphics[width=0.43\columnwidth]{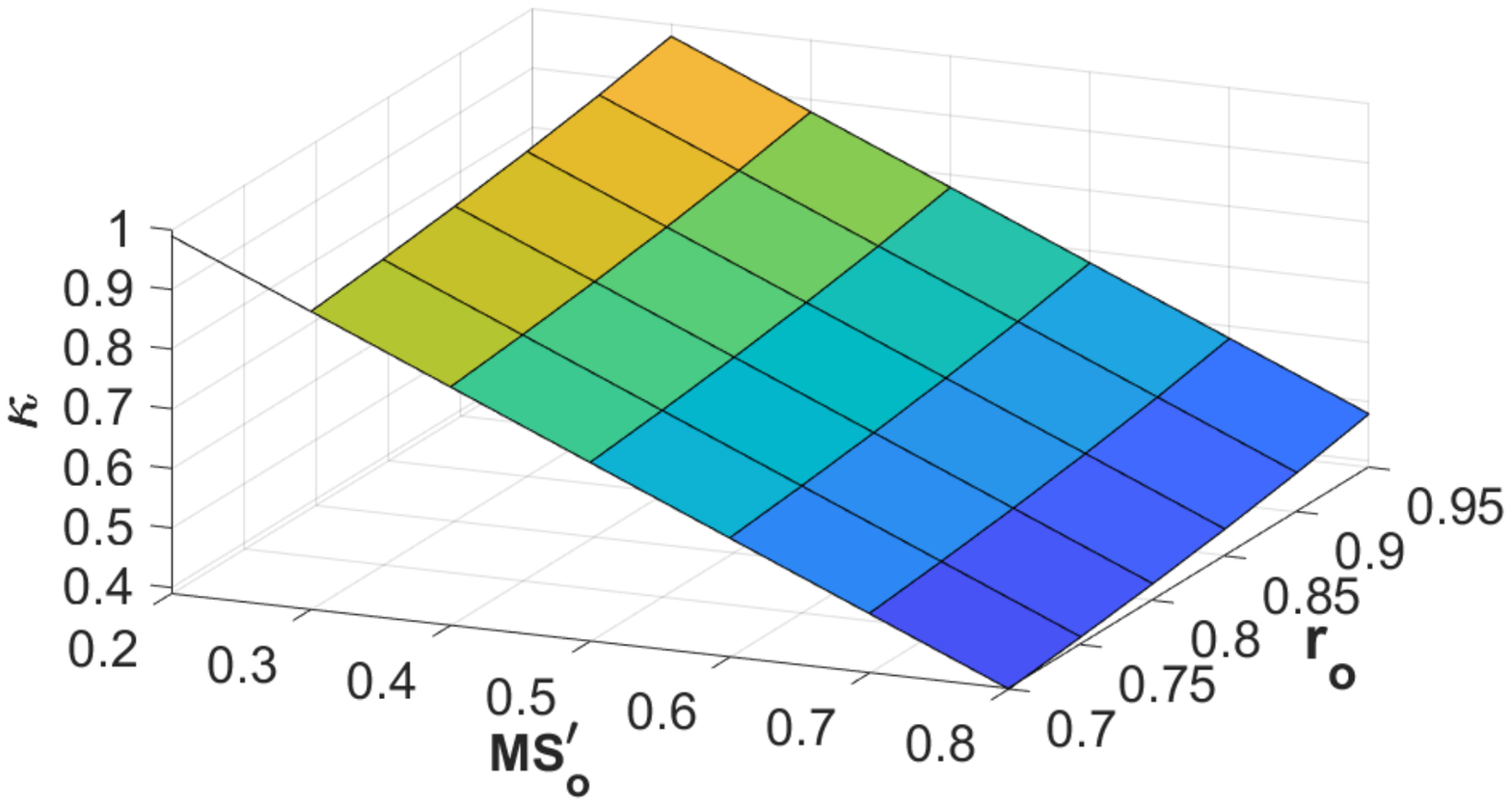}
\label{Fig-Friendliness-Fast-2}}
\hspace{0.7cm}\subfigure[Fast-growing market, $n^{\prime}=3$]{\includegraphics[width=0.43\columnwidth]{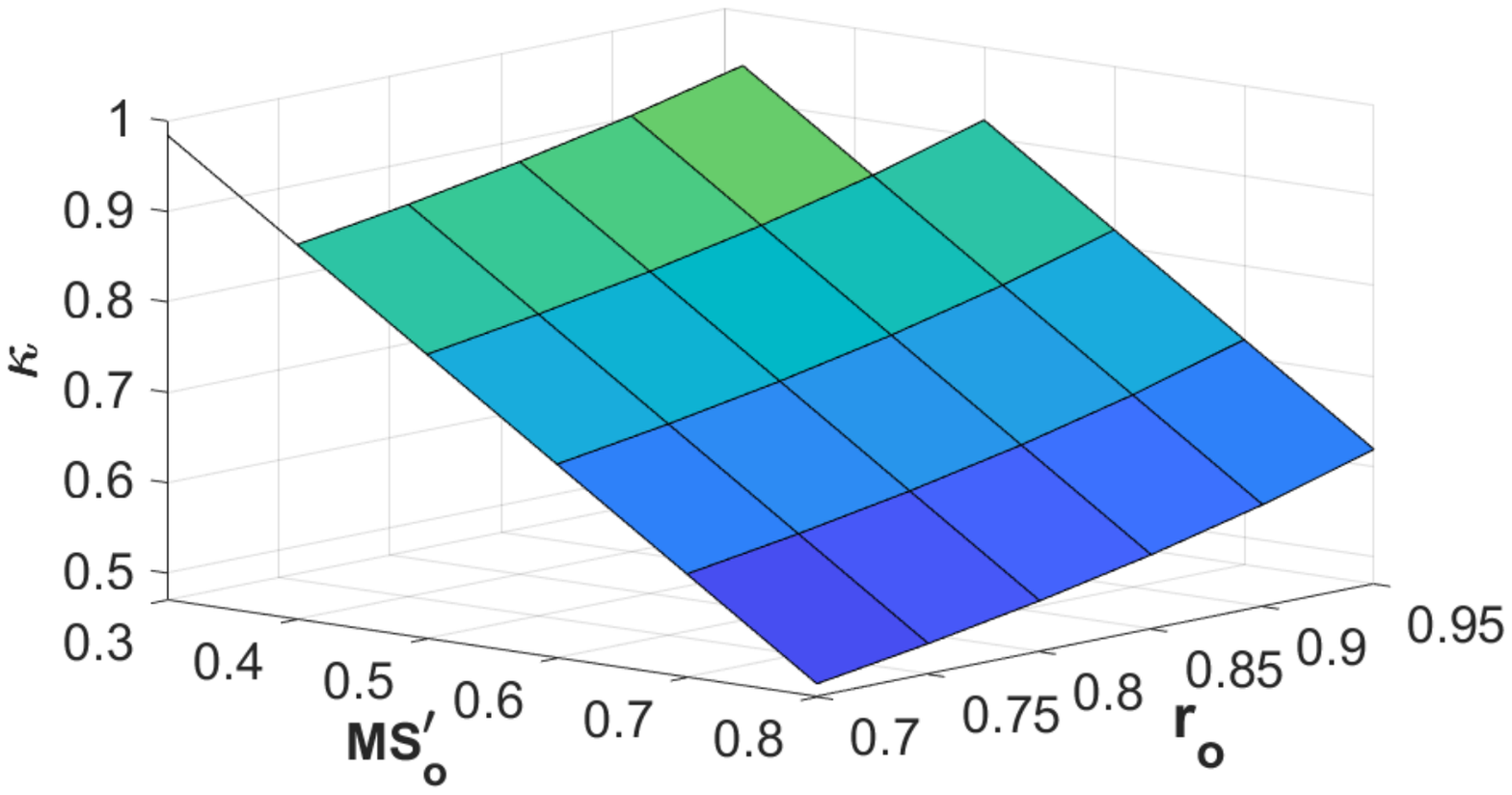}
\label{Fig-Friendliness-Fast-3}}
\caption{The Friendliness $\kappa$}
\label{Fig-Friendliness}
\end{figure}

\section{Conclusions and Future Work}

In this paper, we propose an analytical framework to understand the impact of FL on firms' market shares under various market settings.  For each FL-PT, we characterize the process by which it joins FL as a non-cooperative game and derive its dominant strategy. For the decision-makers of an FL ecosystem, MarS-FL provides a tight lower bound of the minimum relative ML model performance improvement required by each FL-PT in order to maintain the market $\delta$-stability. MarS-FL also provides a notion of {\em friendliness} to measure how conducive a market is for FL, and a sufficient and necessary condition for the viability of FL in a competitive market. The results of this paper can guide non-monetary FL incentive mechanisms to allocate model performance improvements among FL-PTs in order to encourage larger data owners to overcome their fear of smaller FL-PTs free-riding on them and join FL. 

In future, we will consider designing under the centralized architecture a generic parametric algorithm to determine the allocation of ML model performance while maintaining the market $\delta$-stability. Furthermore, in practice, the related market parameters such as customer loyalty and switching are sometimes estimated with uncertainty \cite{verbeke11a}. We will leverage the theory of robust optimization\cite{bertsimas2011theory} to address this. Overall, this paper provides a conceptual framework that serves as a starting point and allows for the further development of FL in the scenarios whenever FL-PTs are in a competitive market \cite{Zhan21a,yang2021toward,kalloori2021horizontal}.

\bibliographystyle{ACM-Reference-Format}
\bibliography{sample}

\end{document}